\documentclass{article}




     \usepackage[final]{neurips_2020}


\usepackage[utf8]{inputenc} 
\usepackage[T1]{fontenc}    
\usepackage{hyperref}       
\usepackage{url}            
\usepackage{booktabs}       
\usepackage{amsfonts}
\usepackage{subcaption}
\usepackage{nicefrac}       
\usepackage{microtype}      

\usepackage{algorithm}
\usepackage{algpseudocode}

\usepackage{mathtools}
\usepackage{graphicx}
\usepackage{amssymb}
\usepackage{bbm}
\usepackage{amsthm}
\usepackage{xpatch}
\usepackage{mathrsfs}

\usepackage{url}
\usepackage{array}
\usepackage{wrapfig}
\usepackage{multirow}
\usepackage{tabularx}
\usepackage[normalem]{ulem} 
\usepackage{enumerate}
\usepackage{enumitem}


\usepackage[usenames,dvipsnames]{xcolor}
\newcommand\cut[1]{{}}  
\makeatletter
\xpatchcmd{\algorithmic}{\itemsep\z@}{\itemsep=.5ex plus.5pt}{}{}
\makeatother
\usepackage{mathtools}
\mathtoolsset{showonlyrefs}


%


\newcommand{\sA}{{\mathscr{A}}}

\newcommand{\cA}{{\mathcal{A}}}

\newcommand{\cD}{{\mathcal{D}}}
\newcommand{\cE}{{\mathcal{E}}}
\newcommand{\cF}{{\mathcal{F}}}
\newcommand{\cG}{{\mathcal{G}}}

\newcommand{\cL}{{\mathcal{L}}}
\newcommand{\cM}{{\mathcal{M}}}
\newcommand{\cN}{{\mathcal{N}}}
\newcommand{\cO}{{\mathcal{O}}}
\newcommand{\cP}{{\mathcal{P}}}

\newcommand{\cS}{{\mathcal{S}}}

\newcommand{\cU}{{\mathcal{U}}}

\newcommand{\cX}{{\mathcal{X}}}
\newcommand{\cY}{{\mathcal{Y}}}
\newcommand{\cZ}{{\mathcal{Z}}}


\newcommand{\RR}{\mathbb{R}}


\newcommand{\unif}{{\mathrm{Unif}}}
\newcommand{\poly}{{\mathrm{poly}}} 


\renewcommand{\iota}{I}



\DeclareMathOperator*{\argmax}{argmax}


\newcommand{\bc}{\begin{center}}
\newcommand{\ec}{\end{center}}

\newcommand{\bdm}{\begin{displaymath}}
\newcommand{\edm}{\end{displaymath}}

\newcommand{\beq}{\begin{equation}}
\newcommand{\eeq}{\end{equation}}

\newcommand{\bfl}{\begin{flushleft}}
\newcommand{\efl}{\end{flushleft}}

\newcommand{\bt}{\begin{tabbing}}
\newcommand{\et}{\end{tabbing}}

\newcommand{\beqn}{\begin{align}}
\newcommand{\eeqn}{\end{align}}

\newcommand{\beqs}{\begin{align*}} 
\newcommand{\eeqs}{\end{align*}}  


\newtheorem{theorem}{Theorem}

\newtheorem{definition}{Definition}
\newtheorem{corollary}{Corollary}

\newtheorem{lemma}{Lemma}
\newtheorem{proposition}{Proposition}

\makeatletter
\renewcommand*\env@matrix[1][\arraystretch]{%
  \edef\arraystretch{#1}%
  \hskip -\arraycolsep
  \let\@ifnextchar\new@ifnextchar
  \array{*\c@MaxMatrixCols c}}
\makeatother

\newcommand{\ulo}{\cU\cL\cO}
\newcommand{\tsr}{\texttt{TSR}}



\newenvironment{itemize*}%
{\begin{itemize}[leftmargin=*,topsep=5pt]%
		\setlength{\itemsep}{1pt}%
		\setlength{\parskip}{1pt}}%
	{\end{itemize}}

\title{Provably Efficient Exploration for Reinforcement Learning Using Unsupervised Learning
	\thanks{
		Correspondence to: Simon S. Du <ssdu@cs.washington.edu>, Lin F. Yang <linyang@ee.ucla.edu>
	}
	}

%

\author{%
Fei Feng \\
University of California, Los Angeles\\
\texttt{fei.feng@math.ucla.edu} \\
\And
Ruosong Wang \\
Carnegie Mellon University\\
\texttt{ruosongw@andrew.cmu.edu} \\
\And
Wotao Yin\\
University of California, Los Angeles\\
\texttt{wotaoyin@math.ucla.edu}\\
\And
Simon S. Du\\
University of Washington\\
\texttt{ssdu@cs.washington.edu}
\And
Lin F. Yang\\
University of California, Los Angeles\\
\texttt{linyang@ee.ucla.edu}
}

\begin{document}

\maketitle

\begin{abstract}
Motivated by the prevailing paradigm of using unsupervised learning for efficient exploration in reinforcement learning (RL) problems~\citep{tang2017exploration,bellemare2016unifying}, we investigate when this paradigm is provably efficient.
We study episodic Markov decision processes with rich observations generated from a small number of latent states.
We present a general algorithmic framework that is built upon two components: an unsupervised learning algorithm and a no-regret tabular RL algorithm. 
Theoretically, we prove that as long as the unsupervised learning algorithm enjoys a polynomial sample complexity guarantee, we can find a near-optimal policy with sample complexity polynomial in the number of latent states, which is significantly smaller than the number of observations. Empirically, we instantiate our framework on a class of hard exploration problems to demonstrate the practicality of our theory.
\end{abstract}

\section{Introduction}\label{sec:intro}

Reinforcement learning (RL) is the framework of learning to control an unknown system through trial and error. It takes as inputs the observations of the environment and outputs a policy, i.e., a mapping from observations to actions, to maximize the cumulative rewards. To learn a near-optimal policy, it is critical to sufficiently explore the environment and identify all the opportunities for high rewards. However, modern RL applications often need to deal with huge observation spaces such as those consist of images or texts, which makes it challenging or impossible (if there are infinitely many observations) to fully explore the environment in a direct way. In some work, function approximation scheme is adopted such that essential quantities for policy improvement, e.g. state-action values, can be generalized from limited observed data to the whole observation space. However, the use of function approximation alone does not resolve the exploration problem~\citep{Du2020Is}.

To tackle this issue, multiple empirically successful strategies are developed~\citep{tang2017exploration,bellemare2016unifying,pathak2017curiosity,azizzadenesheli2018efficient,lipton2018bbq,fortunato2018noisy,osband2016generalization}.
Particularly, in \cite{tang2017exploration} and \cite{bellemare2016unifying}, 
the authors use state abstraction technique to reduce the problem size. They construct a mapping from observations to a small number of hidden states and devise exploration on top of the latent state space rather than the original observation space.

To construct such a state abstraction mapping, practitioners often use \emph{unsupervised learning}. 
The procedure has the following steps: collect a batch of observation data, apply unsupervised learning to build a mapping, use the mapping to guide exploration and collect more data, and repeat.
Empirical study evidences the effectiveness of such an approach at addressing hard exploration problems (e.g., the infamous Montezuma's Revenge).
However, it has not been theoretically justified. 
In this paper, we aim to answer this question:
\begin{center}
	\emph{Is exploration driven by unsupervised learning in general \textbf{provably efficient}}?
\end{center} 
The generality includes the choice of unsupervised learning algorithms, reinforcement learning algorithms, and the condition of the problem structure. 

We first review some existing theoretical results on provably efficient exploration. 
More discussion about related work is deferred to appendix. For an RL problem with finitely many states,
there are many algorithms with a tabular implementation that learn to control efficiently.
These algorithms can  learn a near-optimal policy using a number of samples polynomially depending on the size of the state space.
However, if we 
directly apply these algorithms to rich observations cases by treating each observation as a state,
the sample complexities are polynomial in the cardinality of the observation space. Such a dependency is unavoidable without additional structural assumptions~\citep{jaksch2010near}.
If structural conditions are considered, for example, observations are generated from a small number of latent states \citep{krishnamurthy2016pac,jiang2017contextual,dann2018oracle,du2019provably}, then the sample complexity only scales polynomially with the number of hidden states. 
Unfortunately, the correctness of these algorithms often requires strict assumptions (e.g., deterministic transitions, reachability) that may not be satisfied in many real applications. 
\paragraph{Our Contributions}
In this paper we study RL problems with rich observations generated from a small number of latent states for which an unsupervised learning subroutine is used to guide exploration.
We summarize our contributions below.
\begin{itemize*}
\item We propose a new algorithmic framework for the Block Markov Decision Process (BMDP) model~\citep{du2019provably}.
We combine an unsupervised learning oracle and a tabular RL algorithm in an organic way to find a near-optimal policy for a BMDP.
The unsupervised learning oracle is an abstraction of methods used in \cite{tang2017exploration,bellemare2016unifying} and widely used statistical generative models. 
Notably, our framework can take almost \emph{any} unsupervised learning algorithms and tabular RL algorithms as subroutines. 
\item Theoretically, we prove that as long as the unsupervised learning oracle and the tabular RL algorithm each has a polynomial sample complexity guarantee, our framework finds a near-optimal policy with sample complexity polynomial in the number of latent states, which is significantly smaller than the number of possible observations (cf. Theorem~\ref{thm:main}). 
To our knowledge, this is the \emph{first} provably efficient method for RL problems with huge observation spaces that uses unsupervised learning for exploration.
Furthermore, our result does not require additional assumptions on transition dynamics as used in \cite{du2019provably}.
Our result theoretically sheds light on the success of the empirical paradigms used in \cite{tang2017exploration,bellemare2016unifying}. 

\item We instantiate our framework with particular unsupervised learning algorithms and tabular  RL algorithms on hard exploration environments with rich observations studied in \cite{du2019provably}, and compare with other methods tested in \cite{du2019provably}.
Our experiments demonstrate our method can significantly outperform existing methods on these environments.

%

\end{itemize*}

\paragraph{Main Challenge and Our Technique}
We assume there is an unsupervised learning oracle (see formal definition in Section \ref{sec:framework}) which can be applied to learn decoding functions and the accuracy of learning increases as more training data are fed. 
The unsupervised learning algorithm can only guarantee good performance with respect to the input distribution that generates the training data.
Unlike standard unsupervised learning where the input distribution is fixed, in our problem, the input distribution depends on our policy.
On the other hand, the quality of a policy depends on whether the unsupervised learning oracle has (approximately) decoded the latent states.
This interdependency is the main challenge we need to tackle in our algorithm design and analysis.

Here we briefly explain our framework. 
Let $\cM$ be the MDP with rich observations. We form an auxiliary MDP $\cM'$ whose state space is the latent state space of $\cM$. Our idea is to simulate the process of running a no-regret tabular RL algorithm $\sA$ directly on $\cM'$. For each episode, $\sA$ proposes a policy $\pi$ for $\cM'$ and expects a trajectory of running $\pi$ on $\cM'$ for updating and then proceeds. To obtain such a trajectory, we design a policy $\phi$ for $\cM$ as a composite of $\pi$ and some initial decoding functions. We run $\phi$ on $\cM$ to collect observation trajectories. 
Although the decoding functions may be inaccurate initially, they can still help us collect observation samples for later refinement.
After collecting sufficient observations, we apply the unsupervised learning oracle to retrain decoding functions and then update $\phi$ as a composite of $\pi$ and the newly-learned functions and repeat running $\phi$ on $\cM$.
After a number of iterations (proportional to the size of the latent state space), with the accumulation of training data, decoding functions are trained to be fairly accurate on recovering latent states, especially those $\pi$ has large probabilities to visit. This implies that running the latest $\phi$ on $\cM$ is almost equivalent to running $\pi$ on $\cM'$. Therefore, we can obtain a state-action trajectory with high accuracy as the algorithm $\sA$ requires.
Since $\sA$ is guaranteed to output a near-optimal policy after a polynomial (in the size of the true state-space) number of episodes, our algorithm uses polynomial number of samples as well.

\section{Related Work}\label{sec:rel}
In this section, we review related provably efficient RL algorithms.
We remark that we focus on environments that require explicit exploration.
With certain assumptions of the environment, e.g., the existence of a good exploration policy or the distribution over the initial state is sufficiently diverse, one does not need to explicitly explore~\citep{munos2005error,antos2008learning,geist2019theory,kakade2002approximately,bagnell2004policy,scherrer2014local,agarwal2019optimality,yang2019theoretical,chen2019information}.
Without these assumptions, the problem can require an exponential number of samples, especially for policy-based methods~\citep{Du2020Is}.

Exploration is needed even in the most basic tabular setting.
There is a substantial body of work on provably efficient tabular RL~\citep{agrawal2017optimistic,jaksch2010near,kakade2018variance,azar2017minimax,kearns2002near,dann2017unifying,strehl2006pac,jin2018q,simchowitz2019non,zanette2019tighter}.
A common strategy is to use UCB bonus to encourage exploration in less-visited states and actions.
One can also study RL in metric spaces~\citep{pazis2013pac,song2019efficient, 8919864}.
However, in general, this type of algorithms has an exponential dependence on the state dimension.

To deal with huge observation spaces, one might use function approximation.
\citet{wen2013efficient} proposed an algorithm, optimistic constraint propagation (OCP), which enjoys polynomial sample complexity bounds for a family of $Q$-function classes, including the linear function class as a special case.
But their algorithm can only handle deterministic systems, i.e., both transition dynamics and rewards are deterministic.
The setting is recently generalized by \citet{du2019provablyQ} to environments with low variance and by \cite{du2020agnostic} to the agnostic setting.
\cite{li2011knows} proposed a Q-learning algorithm which requires the Know-What-It-Knows oracle. But it is in general unknown how to implement such an oracle.

Our work is closely related to a sequence of works which assumes the transition has certain low-rank structure~\citep{krishnamurthy2016pac,jiang2017contextual,dann2018oracle,sun2019model,du2019provably,jin2019provably,yang2019sample}.
The most related paper is \cite{du2019provably} which also builds a state abstraction map.
Their sample complexity depends on two quantities of the transition probability of the hidden states: \emph{identifiability} and \emph{reachability}, which may not be satisfied in many scenarios.
Identifiability assumption requires that the $L_1$ distance between the posterior distributions (of previous level's hidden state, action pair) given any two different hidden states is strictly larger than some constant (Assumption 3.2 in \cite{du2019provably}). 
This is an inherent necessary assumption for the method in \cite{du2019provably} as they need to use the posterior distribution to distinguish hidden states.
Reachability assumption requires that there exists a constant such that for every hidden state, there exists a policy that reaches the hidden state with probability larger than this constant (Definition 2.1 in \cite{du2019provably}).
Conceptually, this assumption is not needed for finding a near-optimal policy because if one hidden state has negligible reaching probability, one can just ignore it.
Nevertheless, in \citet{du2019provably}, the reachability assumption is also tied with building the abstraction map.
Therefore, it may not be removable if one uses the strategy in \cite{du2019provably}.
In this paper, we show that given an unsupervised learning oracle, one does not need the identifiability and reachability assumptions for efficient exploration.

\section{Preliminaries}\label{sec:setting}
\paragraph{Notations}
Given a set $\cA$, we denote by $|\cA|$ the cardinality of $\cA$, $\cP(\cA)$ the set of all probability distributions over $\cA$, and $\unif(\cA)$ the uniform distribution over $\cA$. We use $[h]$ for the set $\{1,2,\dots,h\}$ and $f_{[h]}$ for the set of functions $\{f_1,f_2,\dots,f_h\}$. Given two functions $f:\cX\rightarrow\cY$ and $g:\cY\rightarrow\cZ$, their composite is denoted as $g\circ f:\cX\rightarrow\cZ$.

\paragraph {Block Markov Decision Process} 
We consider a Block Markov Decision Process (BMDP), which is first formally introduced in \citet{du2019provably}. A BMDP is described by a tuple $\cM:=(\cS,\cA,\cX, \cP, r, f_{[H+1]}, H)$. $\cS$ is a finite unobservable \emph{latent state space}, $\cA$ is a finite action space, and $\cX$ is a possibly infinite observable context space. $\cX$ can be partitioned into $|\cS|$ disjoint blocks $\{\cX_s\}_{s\in\cS}$, where each block $\cX_s$ corresponds to a unique state $s$. $\cP$ is the collection of the \emph{state-transition probability} $p_{[H]}(s'|s,a)$ and the \emph{context-emission distribution} $q(x|s)$ for all $s,s'\in\cS, a\in\cA, x\in\cX$. $r:[H]\times\cS\times\cA\rightarrow[0,1]$ is the reward function. $f_{[H+1]}$ is the set of decoding functions, where $f_h$ maps every observation at level $h$ to its true latent state. Finally, $H$ is the length of horizon.
When $\cX=\cS$, this is the usual MDP setting.


For each episode, the agent starts at level 1 with the initial state $s_1$ and takes $H$ steps to the final level $H+1$. 
We denote by $\cS_h$ and $\cX_h$ the set of possible states and observations at level $h\in[H+1]$, respectively. 
At each level $h\in[H+1]$, the agent has no access to the true latent state $s_h\in\cS_h$ but an observation $x_h\sim q(\cdot|s_h)$. An action $a_h$ is then selected following some policy $\phi:[H]\times\cX\rightarrow \cP(\cA)$. As a result, the environment evolves into a new state $s_{h+1}\sim p_h(\cdot|s_h,a_h)$ and the agent receives an instant reward $ r(h, s_h,a_h)$. A trajectory has such a form: $\{s_1, x_1, a_1, \dots, s_H, x_H, a_H, s_{H+1}, x_{H+1}\}$, where all state components are unknown.

\paragraph{Policy} Given a BMDP $\cM:=(\cS,\cA,\cX,\cP,r,f_{[H+1]},H)$, there is a corresponding MDP $\cM':=(\cS,\cA,\cP,r,H)$, which we refer to as \emph{the underlying MDP} in the later context. A policy on $\cM$ has a form $\phi:[H]\times\cX\rightarrow \cP(\cA)$ and a policy on $\cM'$ has a form $\pi:[H]\times\cS\rightarrow\cP(\cA)$. Given a policy $\pi$ on $\cM'$ and a set of functions $\hat{f}_{[H+1]}$ where $\hat{f}_h:\cX_h\rightarrow\cS_h, \forall~ h\in[H+1]$, we can induce a policy on $\cM$ as $\pi\circ\hat{f}_{[H+1]}=:\phi$ such that 
$
\phi(h, x_h)=\pi(h, \hat{f}_h(x_h)), ~\forall~ x_h\in\cX_h, ~h\in[H].
$
If $\hat{f}_{[H+1]}=f_{[H+1]}$, then $\pi$ and $\phi$ are equivalent in the sense that they induce the same probability measure over the state-action trajectory space.

Given an MDP, the value of a policy $\pi$ (starting from $s_1$) is defined as $ V_1^{\pi}
    =~\mathbb{E}^{\pi} \left[\sum_{h=1}^{H}r(h,s_h,a_h)\Big|s_1\right]$,
\cut{\begin{align}
    V_1^{\pi}
    =~\mathbb{E}^{\pi} \left[\sum_{h=1}^{H}r(h,s_h,a_h)\Big|s_1\right]
    =~\sum_{h=1}^H\sum_{s\in\cS_h}\sum_{a\in\cA} P^{\pi}_h(s|s_1)\pi(a|h,s)r(h,s,a),
\end{align}}
A policy that has the maximal value is an \emph{optimal policy} and the \emph{optimal value} is denoted by $V_1^*$, i.e., $V_1^*=\max_{\pi} V_1^{\pi}$. Given $\varepsilon>0$, we say $\pi$ is \emph{$\varepsilon$-optimal} if $V_1^*-V_1^\pi\leq\varepsilon$. Similarly, given a BMDP, we define the value of a policy $\phi$ (starting from $s_1$) as: $ V_1^{\phi}
    =~\mathbb{E}^{\phi} \left[\sum_{h=1}^{H}r(h,s_h,a_h)\Big|s_1\right]$,
\cut{\begin{align}
    V_1^{\phi}
    =~\mathbb{E}^{\phi} \left[\sum_{h=1}^{H}r(h,s_h,a_h)\Big|s_1\right]
    =~\sum_{h=1}^H\sum_{s\in\cS_h}\sum_{x\in\cX_s}\sum_{a\in\cA} P^{\phi}_h(s|s_1)q(x|s)\phi(a|h, x)r(h,s,a).
\end{align}}
The notion of optimallity and $\varepsilon$-optimality are similar to MDP. 


\section{A Unified Framework for Unsupervised Reinforcement Learning}\label{sec:framework}
\begin{algorithm}[t]
	\caption{A Unified Framework for Unsupervised RL}
	\label{alg:framework}
	\begin{algorithmic}[1]
		\State \textbf{Input:} BMDP $\cM$; $\cU\cL\cO$; $(\varepsilon, \delta)$-correct episodic no-regret algorithm $\sA$; batch size $B>0$; $\varepsilon\in(0,1)$; $\delta\in(0,1)$; $N:=\lceil\log(2/\delta)/2\rceil$; $L:=\lceil9H^2/(2\varepsilon^2)\log(2N/\delta)\rceil$.
		\For {$n=1$ {\bfseries to } $N$ }
		\State Clear the memory of $\sA$ and restart;
		\For {episode $k=1$ {\bfseries to } $K$ }
		\State Obtain $\pi^k$ from $\sA$;
		\State Obtain a trajectory: 
		$\tau^k, f_{[H+1]}^{k}\gets \tsr(\ulo, \pi^k, B)$;\label{line:calltsr}
		\State Update the algorithm:
		$\sA\gets \tau^k$; 
		\EndFor
		\State Obtain $\pi^{K+1}$ from $\sA$;
		\State Finalize the decoding functions: $\tau^{K+1}, f_{[H+1]}^{K+1}\gets\texttt{TSR}(\cU\cL\cO, \pi^{K+1}, B)$; 
		\State Construct a policy for $\cM: \phi^n\gets\pi^{K+1}\circ f_{[H+1]}^{K+1}$.
		\EndFor
		\State Run each $\phi^n~ (n\in[N])$ for $L$ episodes and get the average rewards per episode $\bar{V}_1^{\phi^n}$.
		\State Output a policy $\phi\in\argmax_{\phi\in\phi^{[N]}} \bar{V}_1^{\phi}$.
	\end{algorithmic}
\end{algorithm}

\subsection{Unsupervised Learning Oracle and No-regret Tabular RL Algorithm}
In this paper, we consider RL on a BMDP. The goal is to find a near-optimal policy with sample complexity polynomial to the cardinality of the latent state space. We assume no knowledge of $\cP$, $r$, and $f_{[H+1]}$, but the access to an unsupervised learning oracle $\cU\cL\cO$ and an ($\varepsilon, \delta$)-correct episodic no-regret algorithm. We give the definitions below.

\begin{definition}[Unsupervised Learning Oracle $\cU\cL\cO$]
	\label{def:ulo}
	There exists a function $g(n, \delta)$ such that for any fixed $\delta > 0$, $\lim_{n\rightarrow\infty}g(n,\delta)=0$.
	Given a distribution $\mu$ over $\cS$, and $n$ samples from $\sum_{s \in \cS}q(\cdot | s)\mu(s)$, with probability at least $1-\delta$, 
	we can find a function $\hat{f} : \cX \rightarrow \cS$ such that $$\mathbb{P}_{s \sim \mu, x \sim q(\cdot | s)} \big(\hat{f}(x) = \alpha(s) \big) \ge 1- g(n,\delta)$$ for some 
	unknown permutation $\alpha:\cS\rightarrow\cS$.
\end{definition}
In Definition \ref{def:ulo}, suppose $f$ is the true decoding function, i.e., $\mathbb{P}_{s \sim \mu, x \sim q(\cdot | s)} \big(f(x) = s\big) = 1$. We refer to the permutation $\alpha$ as a \emph{good permutation} between $f$ and $\hat{f}$.
Given $g(n,\delta)$, we define $g^{-1}(\epsilon, \delta) := \min\{N~\vert~ \text{for all } n>N, g(n,\delta)<\epsilon\}.$
Since $\lim_{n\rightarrow\infty}g(n,\delta)=0$, $g^{-1}(\epsilon,\delta)$ is well-defined. We assume that $g^{-1}(\epsilon, \delta)$ is a polynomial in terms of $1/\epsilon,\log(\delta^{-1})$ and possibly problem-dependent parameters.

This definition is motivated by \cite{tang2017exploration} in which authors use auto-encoder and SimHash~\citep{charikar2002similarity} to construct the decoding function and they use this UCB-based approach on top of the decoding function to guide exploration.
It is still an open problem to obtain a sample complexity analysis for auto-encoder. Let alone the composition with SimHash.
Nevertheless, in Appendix \ref{app:example}, we give several examples of $\ulo$ with theoretical guarantees.
Furthermore, once we have an analysis of auto-encoder and we can plug-in that into our framework effortlessly.


\begin{definition}[($\varepsilon, \delta$)-correct No-regret Algorithm]\label{def:correct} Let $\varepsilon>0$ and $\delta>0$. $\sA$ is an $(\varepsilon,\delta)$-correct no-regret algorithm if for any  MDP $\cM':=(\cS, \cA, \cP, r, H)$ with the initial state $s_1$, $\sA$
\begin{itemize*}
    \item runs for at most $\poly(|\cS|,|\cA|,H,1/\varepsilon,\log(1/\delta))$ episodes (the sample complexity of $\sA$);
    \item proposes a policy $\pi^{k}$ at the beginning of episode $k$ and collects a trajectory of $\cM'$ following $\pi^{k}$;
    \item outputs a policy $\pi$ at the end such that with probability at least $1-\delta$, $\pi$ is $\varepsilon$-optimal.
\end{itemize*}
\end{definition}

Definition \ref{def:correct} simply describes tabular RL algorithms that have polynomial sample complexity guarantees for episodic MDPs. Instances are vivid in literature (see Section~\ref{sec:rel}). 

\subsection{A Unified Framework}
With a $\ulo$ and an ($\varepsilon, \delta$)-correct no-regret algorithm $\sA$, we propose a unified framework in Algorithm \ref{alg:framework}. Note that we use $\ulo$ as a black-box oracle for abstraction and generality.
For each episode, we combine the policy $\pi$ proposed by $\sA$ for the underlying MDP together with certain decoding functions $\hat{f}_{[H+1]}$\footnote{For the convenience of analysis, we learn decoding functions for each level separately. In practice, observation data can be mixed up among all levels and used to train one decoding function for all levels.} to generate a policy $\pi\circ \hat{f}_{[H+1]}$ for the BMDP. 
Then we collect observation samples using $\pi\circ \hat{f}_{[H+1]}$ and all previously generated policies over the BMDP.
As more samples are collected, we refine the decoding functions via $\ulo$.
Once the sample number is enough, a trajectory of $\pi\circ \hat{f}_{[H+1]}$ is as if obtained using the true decoding functions (up to some permutations).
 Therefore, we successfully simulate the process of running $\pi$ directly on the underlying MDP.
We then proceed to the next episode with the latest decoding functions and repeat the above procedure until $\sA$ halts. 
Note that this procedure is essentially what practitioners use in \cite{tang2017exploration,bellemare2016unifying}  as we have discussed in the beginning.

We now describe our algorithm in more detail.
Suppose the algorithm $\sA$ runs for $K$ episodes. 
At the beginning of each episode $k\in[K]$, $\sA$ proposes a policy $\pi^k:[H]\times\cS\rightarrow\cA$ for the underlying MDP.
Then we use the Trajectory Sampling Routine ($\texttt{TSR}$) to generate a trajectory $\tau^k$ using $\pi^k$ and then feed $\tau^k$ to $\sA$.
After $K$ episodes, we obtain a policy $\pi^{K+1}$ from $\sA$ and a set of decoding functions $f^{K+1}_{[H+1]}$ from $\texttt{TSR}$. 
We then construct a policy for the BMDP as $\pi^{K+1}\circ f^{K+1}_{[H+1]}$. 
We repeat this process for $N$ times for making sure our algorithm succeeds with high probability.

\begin{algorithm}[t]
  \caption{Trajectory Sampling Routine \texttt{TSR} ($\cU\cL\cO, \pi, B$)}
  \label{alg:tsr}
  \begin{algorithmic}[1]
  \State \textbf{Input:} $\cU\cL\cO$; a policy $\pi:[H]\times\cS\rightarrow \cA$; episode index $k$; batch size $B>0$; $\epsilon\in(0,1)$; $\delta_1\in(0,1)$; $J:=(H+1)|\cS|+1$.
  \State \textbf{Data:}
  \vspace{-0.2cm}
  \begin{itemize}
    \setlength\itemsep{0em}
    \item a policy set $\Pi$;
    \item label standard data $\cZ:=\{\cZ_{1}, \cZ_{2}, \ldots \cZ_{H+1}\}$, 
    $\cZ_h:=\{\cD_{h,s_1}, \cD_{h,s_2},\ldots\}$; 
    \item present decoding functions $f^{0}_{[H+1]}$;
    
  \end{itemize}
  \vspace{-0.1cm}
  \For {$i=1$ {\bfseries to } $J$ }
    \State Combine policy: $\Pi\gets \Pi \cup \{\pi\circ f^{i-1}_{[H+1]}\}$;
    \State Generate $((k-1)J+i)\cdot B$ trajectories of training data $\cD$ with $\unif(\Pi)$;\label{line:tsr_generate1}
    \State Generate $B$ trajectories of testing data $\cD''$ with $\pi\circ {f}_{[H+1]}^{i-1}$.\label{line:tsr_generate2}
    \State Train with $\ulo$: $\tilde{f}_{[H+1]}^{i}\gets\ulo(\cD)$;\label{line:trainulo}
    \State Match labels:
    $f^{i}_{[H+1]}\gets \text{FixLabel}(\tilde{f}_{[H+1]}^{i}, \cZ)$; \label{line:tsr_label}
    \For {$h\in[H+1]$}
    \State 
    Let $\cD''_{h,s} := \{x\in \cD''_{h} : f_h^{i}(x) =s, s\in \cS_h\}$;\label{line:D''s}
    \State Update label standard set: if $\cD_{h,s}\not\in \cZ_h$ and $|\cD''_{h,s}|\ge 3\epsilon \cdot B\log(\delta_1^{-1})$, then let $\cZ_h\gets\cZ_h\cup \{\cD''_{h,s}\}$\label{line:update}
    \EndFor\label{line:tsr_label_end}
  \EndFor
   \State Run $\pi\circ f_{[H+1]}^{J}$ to obtain a trajectory $\tau$;
   \State Renew $f^{0}_{[H+1]}\gets f_{[H+1]}^{J}$;
   \State \textbf{Output:} $\tau, f_{[H+1]}^{J}$.
\end{algorithmic}
\end{algorithm}
\begin{algorithm}[t]
	\caption{FixLabel($\tilde{f}_{[H+1]}, \cZ$)}
	\label{alg:label}
	\begin{algorithmic}[1]
		\State \textbf{Input:} decoding functions $\tilde{f}_{[H+1]}$; a set of label standard data $\cZ:=\{\cZ_1, \cZ_2, \cdots, \cZ_{H+1}\}$, $\cZ_h:=\{\cD_{h,s_1}, \cD_{h,s_2}, \ldots\}$.
		\For {every $\cD_{h,s}$ in $\cZ$}
		\If{$s\in \cS_h$ and $|\{x\in \cD_{h,s}: \tilde{f}_{h}(x) = s'\}|> 3/5|\cD_{h,s}|$}
		\State Swap the output of $s'$ with $s$ in $\tilde{f}_h$;
		\EndIf
		\EndFor
		\State{\bfseries Output:} $\tilde{f}_{[H+1]}$
	\end{algorithmic}
\end{algorithm}

\cut{
\begin{algorithm}[t]
	\caption{Trajectory Sampling Routine \texttt{TSR} ($\cU\cL\cO, \pi, B$)}
	\label{alg:tsr}
	\begin{algorithmic}[1]
		\State \textbf{Input:} $\cU\cL\cO$; a tabular RL policy $\pi$; batch size $B$; $\epsilon\in(0,1)$; $\delta_1\in(0,1)$; $J:=(H+1)|\cS|+1$.
		\State \textbf{Data:} all training data $\cD$ from previous runs;  label standard data $\cZ:=\{\cZ_{1}, \cZ_{2}, \ldots \cZ_{H+1}\}$, 
		$\cZ_h:=\{\cD_{h,s_1}, \cD_{h,s_2},\ldots\}$; the latest decoding functions $f^{0}_{[H+1]}$.
		\For {$i=1$ {\bfseries to } $J$ }
		\State Run $\pi\circ {f}_{[H+1]}^{i-1}$ in the BMDP environment to get $B$ trajectories as training data ${\cD'}$ and another $B$ trajectories as testing data $\cD''$; \label{line:tsr_generate}
		\State Combine training data: $\cD\gets\cD\cup{\cD'}$;
		\State Train with $\ulo$: $\tilde{f}_{[H+1]}^{i}\gets\ulo(\cD)$;\label{line:trainulo}
		\State Match labels:
		$f^{i}_{[H+1]}\gets \text{FixLabel}(\tilde{f}_{[H+1]}^{i}, \cZ)$; \label{line:tsr_label}
		\For {$h\in[H+1]$}
		\State 
		Let $\cD''_{h,s} := \{x\in \cD''_{h} : f_h^{i}(x) =s, s\in \cS_h\}$;\label{line:D''s}
		\State Update $\cZ$: if $\cD_{h,s}\not\in \cZ_h$ and $|\cD''_{h,s}|\ge 3\epsilon \cdot B\log(\delta_1^{-1})$, then let $\cZ_h\gets\cZ_h\cup \{\cD''_{h,s}\}$\label{line:update}
		\EndFor
		\EndFor
		\State Run $\pi\circ f_{[H+1]}^{J}$ to obtain a trajectory $\tau$;
		\State Renew $f^{0}_{[H+1]}\gets f_{[H+1]}^{J}$;
		\State \textbf{Output:} $\tau, f_{[H+1]}^{J}$.
	\end{algorithmic}
\end{algorithm}
}

The detailed description of $\texttt{TSR}$ is displayed in Algorithm \ref{alg:tsr}. 
We here briefly explain the idea. To distinguish between episodes, with input policy $\pi^k$ (Line \ref{line:calltsr} Algorithm \ref{alg:framework}), we add the episode index $k$ as superscripts to $\pi$ and $f_{[H+1]}$ in $\texttt{TSR}$.
We maintain a policy set in memory and initialize it as an empty set at the beginning of Algorithm \ref{alg:framework}.
Note that, at each episode, our goal is to simulate a trajectory of $\pi$ running on the underlying MDP. 
$\texttt{TSR}$ achieves this in an iterative fashion: it starts with the input policy $\pi^k$ and the latest-learned decoding functions $f^{k,0}_{[H+1]}:=f^{k-1, J}_{[H+1]}$; for each iteration $i$, it first adds the policy $\pi^k\circ f^{k,i-1}_{[H+1]}$ in $\Pi$ and then plays $\unif(\Pi)$ to collect a set of observation trajectories (i.e., each trajectory is generated by first uniformly randomly selecting a policy from $\Pi$ and then running it in the BMDP);\footnote{This resampling over all previous policies is mainly for the convenience of analysis. It can be replaced using previous data but requires more refined analysis.} then updates $f^{k, i-1}_{[H+1]}$ to $\tilde{f}^{k,i}_{[H+1]}$ by running $\ulo$ on these collected observations.
Note that $\ulo$ may output labels inconsistent with previously trained decoding functions. We further match labels of $\tilde{f}^{k,i}_{[H+1]}$ with the former ones by calling the FixLabel routine (Algorithm \ref{alg:label}).
To accomplish the label matching process, we cache a set $\cZ$ in memory which stores observation examples $\cD_{h,s}$ for each state $s$ and each level $h$. $\cZ$ is initialized as an empty set and gradually grows. Whenever we confirm a new label, we add the corresponding observation examples to $\cZ$ (Line \ref{line:update} Algorithm \ref{alg:tsr}). Then for later learned decoding functions, they can use this standard set to correspondingly swap their labels and match with previous functions. After the matching step, we get $f^{k,i}_{[H+1]}$. 
Continuously running for $J$ iterations, we stop and use $\pi^k\circ f^{k,J}_{[H+1]}$ to obtain a trajectory. 

We now present our main theoretical result.
\begin{theorem}\label{thm:main}
	Suppose in Definition \ref{def:ulo}, $g^{-1}(\epsilon, \delta_1)=\poly(|\cS|, 1/\epsilon, \log(\delta_1^{-1}))$ for any $\epsilon, \delta_1\in(0,1)$ and $\sA$ is $(\varepsilon, \delta_2)$-correct with sample complexity $\poly\left(|\cS|,|\cA|,H,1/\varepsilon,\log\left(\delta_2^{-1}\right)\right)$ for any $\varepsilon,\delta_2 \in (0,1)$.
	Then  Algorithm~\ref{alg:framework}
	outputs a policy $\phi$ such that
	with probability at least $1-\delta$, $\phi$ is an $\varepsilon$-optimal policy for the BMDP, using at most $\poly\left(|\cS|,|\cA|,H,1/\varepsilon, \log(\delta^{-1}) \right)$ trajectories.
\end{theorem}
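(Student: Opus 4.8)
The plan is to reduce Theorem~\ref{thm:main} to a single simulation guarantee for the inner routine: with high probability each call $\texttt{TSR}(\ulo,\pi,B)$ returns a trajectory $\tau$ whose latent state-action sequence is distributed exactly as a rollout of $\pi$ on the underlying MDP $\cM'$, together with decoding functions $f^J_{[H+1]}$ that---after a single global relabeling fixed once and for all---correctly decode every state $\pi$ reaches with probability above a threshold $\theta=\Theta(\varepsilon/(H|\cS|))$. Granting this, a coupling argument makes the outer loop transparent: if the per-episode simulation error (the total-variation distance between the returned $\tau$ and a genuine $\cM'$ rollout, controlled by $\theta$ together with the residual decoding error on rare states) is pushed below $\delta_2/(2K)$, then with probability $\ge 1-\delta_2/2$ all $K$ trajectories handed to $\sA$ are simultaneously distributed as true $\cM'$ trajectories, so Definition~\ref{def:correct} applies verbatim and $\pi^{K+1}$ is $\varepsilon$-optimal on $\cM'$. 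Since $\pi$ and $\pi\circ f^{K+1}_{[H+1]}$ induce the same trajectory law whenever $f^{K+1}_{[H+1]}$ decodes correctly on the support of $\pi$, the composite $\phi^n=\pi^{K+1}\circ f^{K+1}_{[H+1]}$ is then $\varepsilon$-optimal for the BMDP.

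The heart of the proof is the progress lemma establishing this simulation guarantee within one $\texttt{TSR}$ call. Fixing the good permutation $\alpha$ of Definition~\ref{def:ulo}, I would induct on the level $h$ to show that after at most $h$ inner iterations the decoder is correct, consistently with $\alpha$, on every relevant state at levels $1,\dots,h$. The inductive step is a bootstrapping observation: letting $h^\ast$ be the smallest level carrying an unsettled relevant state, the hypothesis that levels $1,\dots,h^\ast-1$ are already decoded correctly implies that running $\pi\circ f^{i-1}_{[H+1]}$ reproduces the true action sequence through level $h^\ast-1$ and hence reaches the correct level-$h^\ast$ state distribution. Therefore both the testing batch $\cD''$ and---because the routine inserts this very policy into $\Pi$ before sampling (Line~4)---the training mixture $\unif(\Pi)$ place mass at least $\theta/|\Pi|$ on each relevant level-$h^\ast$ state. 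Feeding the sample size $n=((k-1)J+i)\cdot B=|\Pi|\cdot B$ into Definition~\ref{def:ulo} bounds the mixture decoding error by $g(n,\delta_1)$, so the conditional error on any relevant state is at most $g(n,\delta_1)\,|\Pi|/\theta$; as $g^{-1}$ is polynomial and $|\Pi|\le(K{+}1)J$ is itself polynomial, taking $B=\poly(|\cS|,|\cA|,H,1/\varepsilon,\log(\delta_1^{-1}))$ past the $g^{-1}$ threshold drives this error below any target, whereupon the test $|\cD''_{h,s}|\ge 3\epsilon B\log(\delta_1^{-1})$ in Line~\ref{line:update} certifies each relevant level-$h^\ast$ state and freezes canonical examples into $\cZ_h$. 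Each productive iteration thus settles at least one new state-level pair, and since there are at most $(H+1)|\cS|$ of them the choice $J=(H+1)|\cS|+1$ guarantees all relevant states are settled before the routine returns.

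Consistency of the global relabeling across the many $\ulo$ calls is a separate lemma controlling FixLabel. Each call decodes only up to its own permutation, but once a pair $(h,s)$ is confirmed its canonical examples $\cD_{h,s}$ are frozen in $\cZ$, and in every later iteration FixLabel reassigns a fresh label $s'$ to $s$ exactly when more than $3/5$ of those examples are mapped to $s'$ by $\tilde f^i_h$. Provided each call decodes the confirmed states with conditional error below $1/5$---again arranged by enlarging $B$---this majority vote is unambiguous, so all decoders agree with the single permutation $\alpha$ on the settled set; the constants $3\epsilon$ and $3/5$ are tuned precisely so that Chernoff bounds place the empirical counts on the correct side of their thresholds. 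The rest is bookkeeping: a union bound over the $\cO(N(K{+}1)J)$ calls to $\ulo$, each failing with probability $\le\delta_1$, fixes $\delta_1$ up to logarithmic factors; a Hoeffding bound shows that $L=\lceil 9H^2/(2\varepsilon^2)\log(2N/\delta)\rceil$ rollouts estimate each $\bar V_1^{\phi^n}\in[0,H]$ to additive error below $\varepsilon/3$; and the $N=\lceil\log(2/\delta)/2\rceil$ independent repetitions guarantee that at least one $\phi^n$ is $\varepsilon$-optimal with probability $\ge 1-\delta/2$, so the final $\argmax$ returns an $\varepsilon$-optimal policy with overall probability $\ge 1-\delta$. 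Multiplying the per-iteration sample sizes by $NKJ$ yields the claimed $\poly(|\cS|,|\cA|,H,1/\varepsilon,\log(\delta^{-1}))$ trajectory bound.

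I expect the main obstacle to be the progress lemma of the second paragraph---specifically, converting the \emph{distribution-averaged} accuracy of Definition~\ref{def:ulo} into \emph{per-state} correctness while the training distribution is the uniform mixture over a growing policy set $\Pi$. The dilution factor $|\Pi|$ and the circular dependence between the decoder's accuracy and the states the induced policy actually visits must be untangled carefully, and it is exactly the layer-by-layer bootstrapping, together with the freezing of confirmed labels in $\cZ$, that breaks this circularity.
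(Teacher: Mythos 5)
Your overall architecture matches the paper's: a simulation guarantee for \tsr{} (the paper's Lemma~\ref{lem:tsr}), a label-consistency argument for FixLabel based on the frozen examples in $\cZ$ with the $3/5$-majority and $3\epsilon B\log(\delta_1^{-1})$ thresholds (the paper's Lemma~\ref{lem:label} and Corollary~\ref{coro:1}), and the final reduction to $\sA$'s correctness followed by Hoeffding evaluation and $N$-fold amplification (the paper's Proposition~\ref{prop:main} and proof of Theorem~\ref{thm:main}). However, your progress lemma---the part you yourself flag as the main obstacle---has a genuine quantitative gap. You define ``relevant'' states by the \emph{true} visitation distribution of $\pi$ on $\cM'$ (mass $\ge\theta=\Theta(\varepsilon/(H|\cS|))$), but the data are collected under the \emph{simulated} policy $\pi\circ f^{i-1}_{[H+1]}$. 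Your inductive hypothesis (correct decoding of relevant states at levels $1,\dots,h^*-1$) does not imply that the simulated policy ``reaches the correct level-$h^*$ state distribution''; it implies this only up to a total-variation error of order $\sum_{h'<h^*}\bigl(|\cS|\theta+\text{conditional decoding error}\bigr)\approx H|\cS|\theta$, contributed by the irrelevant (below-threshold) states at earlier levels about which nothing is guaranteed. Since $H|\cS|\theta\gg\theta$, a relevant level-$h^*$ state whose true mass is near the threshold can have simulated mass zero, so neither $\cD''$ nor the mixture $\unif(\Pi)$ is guaranteed the mass $\theta/|\Pi|$ your argument needs, the certification test in Line~\ref{line:update} can fail for it, and the induction stalls. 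Patching this with level-dependent thresholds forces $\theta_{h^*}\gtrsim|\cS|\sum_{h'<h^*}\theta_{h'}$, i.e.\ geometric growth across levels and hence an exponential-in-$H$ sample blow-up that destroys the claimed polynomial complexity.

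The paper escapes this by never anchoring to the true distribution inside \tsr. Its Lemma~\ref{lem:tsr} runs a per-iteration dichotomy with respect to the \emph{current simulated} distribution $\mu_{k,i+1,h}$: either some unconfirmed state has mass $\ge\epsilon'/(8H|\cS|)$ under $\mu_{k,i+1,h}$, in which case that state gets confirmed at the next iteration (progress, at most $(H+1)|\cS|$ times); or every unconfirmed state has small mass under $\mu_{k,i+1,h}$ and the decoding error under $\mu_{k,i+1,h}$ is already small, in which case a coupling chain shows that every \emph{later} decoder $f^{k,j}_{[H+1]}$ stays $\epsilon'$-accurate on \emph{its own} induced distribution $\mu_{k,j+1,h}$. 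Accuracy of the final decoder on its own induced distribution is all that Proposition~\ref{prop:main} requires: on the high-probability event that every observation along a rollout of $\pi^k\circ f^{k,J}_{[H+1]}$ is decoded as $\alpha^*_{[H+1]}\circ f_{[H+1]}$ would decode it, that rollout coincides with a rollout of $\pi^k$ on a relabeled $\cM'$, so no comparison with the true visitation measure of $\pi^k$ is ever needed. Two smaller accounting slips in your sketch: each successful repetition must produce an $\varepsilon/3$-optimal (not $\varepsilon$-optimal) policy, since the $L$-episode evaluation adds up to $2\varepsilon/3$ error before the $\argmax$; and your per-episode TV target $\delta_2/(2K)$ is incompatible with the fixed choice $\theta=\Theta(\varepsilon/(H|\cS|))$---the accuracy thresholds must shrink polynomially with $K$, as the paper's $\epsilon$ and $\epsilon'$ do.
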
 
We defer the proof to Appendix \ref{app:proof}. Theorem \ref{thm:main} formally justifies what we claimed in Section~\ref{sec:intro} that as long as the sample complexity of $\ulo$ is polynomial and $\sA$ is a no-regret tabular RL algorithm, polynomial number of trajectories suffices to find a near-optimal policy.
To our knowledge, this is the first result that proves unsupervised learning can guide exploration in RL problems with a huge observation space.

\cut{\section{Examples of Unsupervised Learning Oracle}\label{sec:example}
In this section, we give some examples of $\ulo$. First notice that 
the generation process of $\ulo$ is termed as the mixture model in statistics \citep{mclachlan1988mixture, mclachlan2004finite}, which has a wide range of applications (see e.g., \citet{bouguila2020mixture}).
We list examples of mixture models and some algorithms as candidates of $\ulo$.

\paragraph{Gaussian Mixture Models (GMM)} 
In GMM, $q(\cdot|s)=\cN(s,\sigma_s^2)$, i.e., observations are hidden states plus Gaussian noise.\footnote{To make the model satisfy the disjoint block assumption in the definition of BMDP, we need some truncation of the Gaussian noise so that each observation only corresponds to a unique hidden state.}
When the noises are (truncated) Gaussian, under certain conditions, e.g. states are well-separated, we are able to identify the latent states with high accuracy. 
A series of works \citep{sanjeev2001learning, vempala2004spectral, achlioptas2005spectral, dasgupta2000two, regev2017learning} proposed algorithms that can be served as $\ulo$.

\paragraph{Bernoulli Mixture Models (BMM)} BMM is considered in binary image processing \citep{juan2004bernoulli} and texts classification \citep{juan2002use}. In BMM, every observation is a point in $\{0,1\}^{d}$. A true state determines a frequency vector.
In \cite{najafi2020reliable}, the authors proposed a reliable clustering algorithm for BMM data with polynomial sample complexity guarantee. 

\paragraph{Subspace Clustering}
In some applications, each state is a set of vectors and observations lie in the spanned subspace. 
Suppose for different states, the basis vectors differ under certain metric, then recovering the latent state is equivalent to subspace clustering. Subspace clustering 
has a variety of applications include face clustering, community clustering, and DNA sequence analysis \citep{wallace2015application, vidal2011subspace, elhamifar2013sparse}. Proper algorithms for $\ulo$ can be found in e.g., \citep{wang2013provable, soltanolkotabi2014robust}. 

In addition to the aforementioned models, other reasonable settings are Categorical Mixture Models \citep{bontemps2013clustering}, Poisson Mixture Models \citep{li2006two}, Dirichlet Mixture Models \citep{dahl2006model} and so on.

}

\section{Numerical Experiments}\label{sec:test}
\begin{figure}[t]
	\centering
	\includegraphics[width=.9\textwidth]{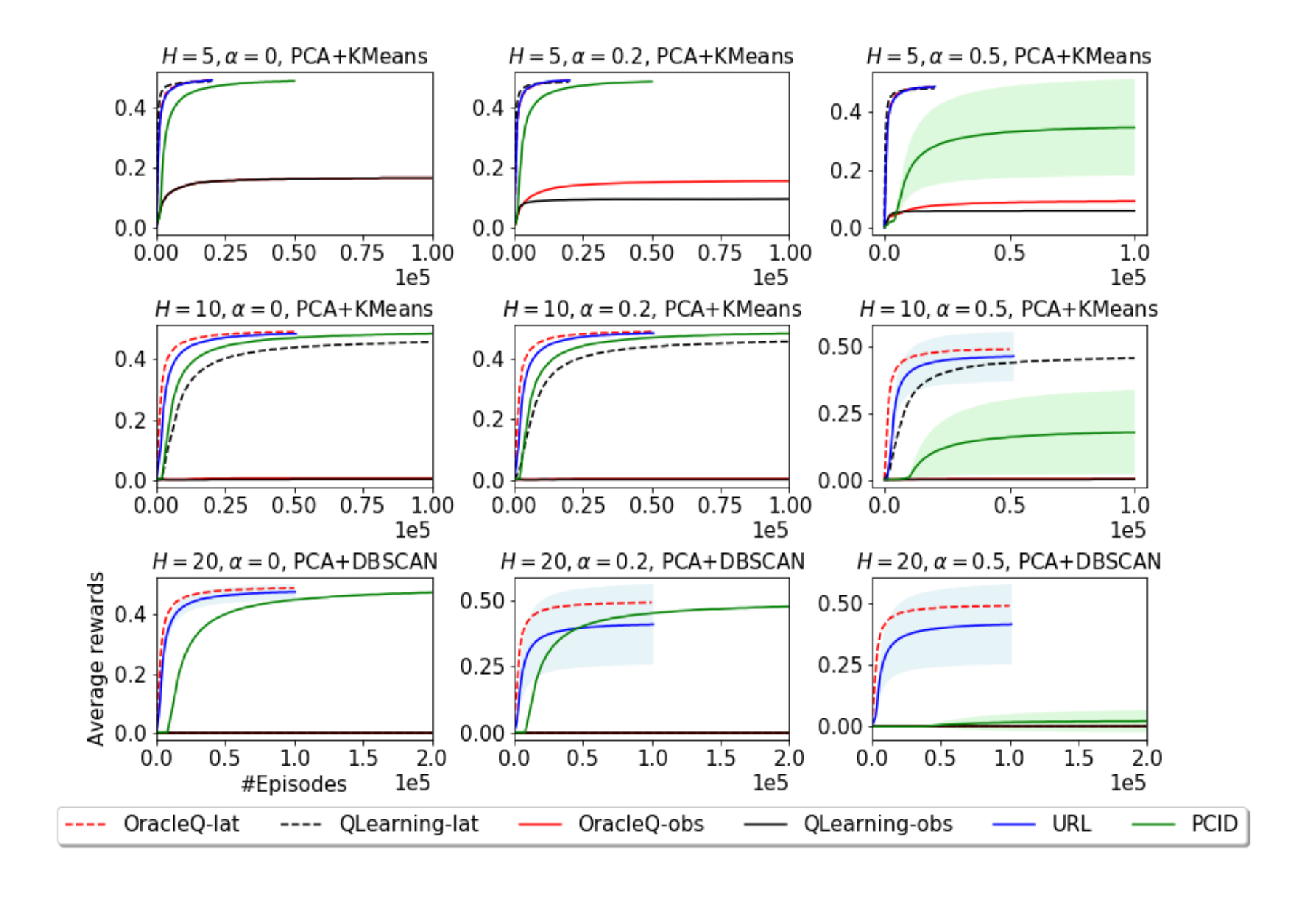}
    	\caption{Performances for LockBernoulli. All lines are mean values of $50$ tests and the shaded areas depict the one standard deviations.}
	\label{fig:ber}
\end{figure}

\begin{figure}[t]
	\centering
	\includegraphics[width=.9\textwidth]{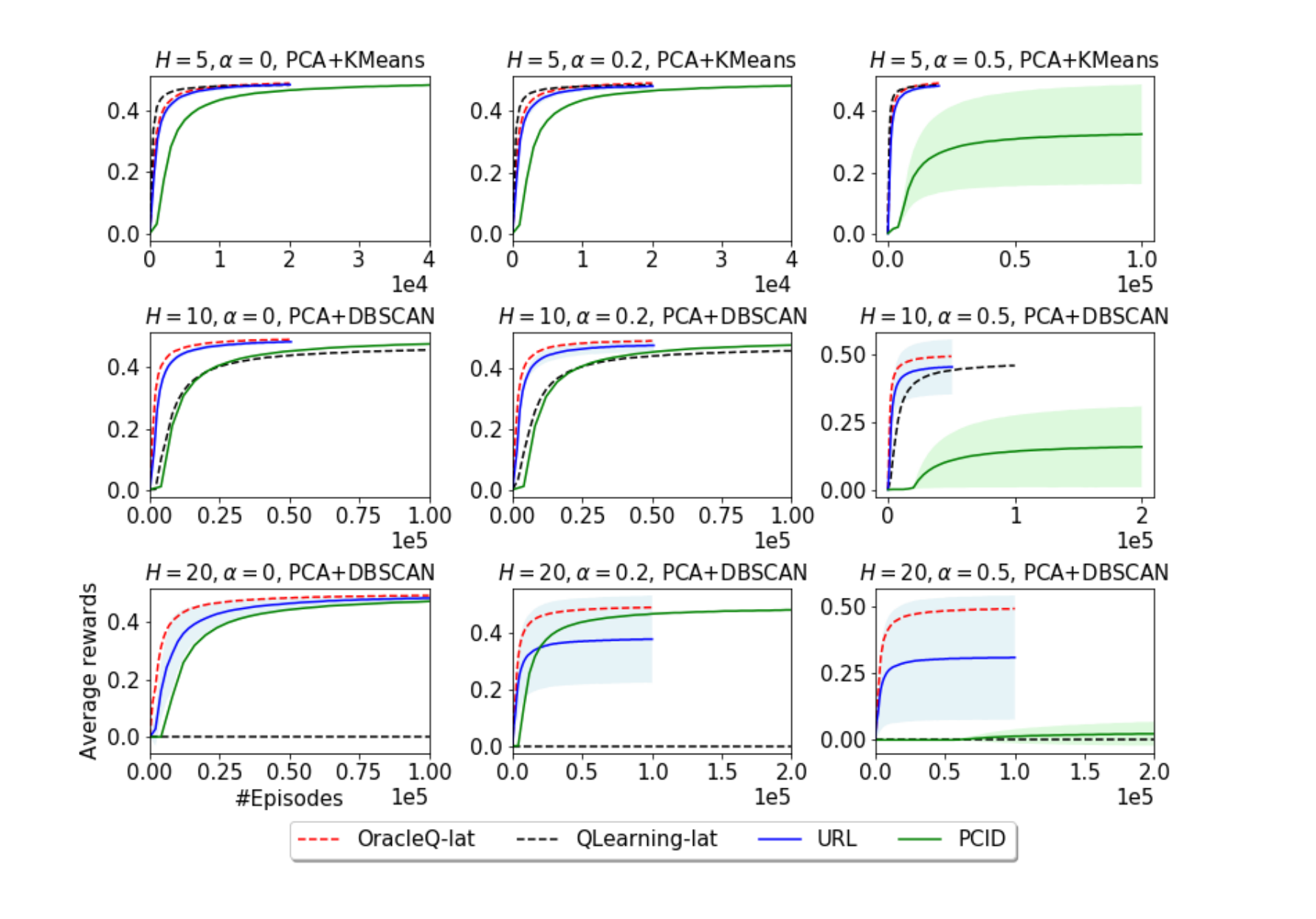}
	\caption{Performances for LockGaussian, $\sigma=0.1$. All lines are mean values of $50$ tests and the shaded areas depict the one standard deviations. OracleQ-lat and QLearning-lat have direct access to the latent states, which are not for practical use. URL and PCID only have access to the observations. 
		OracleQ-obs and QLearning-obs are omitted due to infinitely many observations.
		 }
	\label{fig:gau1}
\end{figure}

\begin{figure}[t]
	\centering
	\includegraphics[width=.9\textwidth]{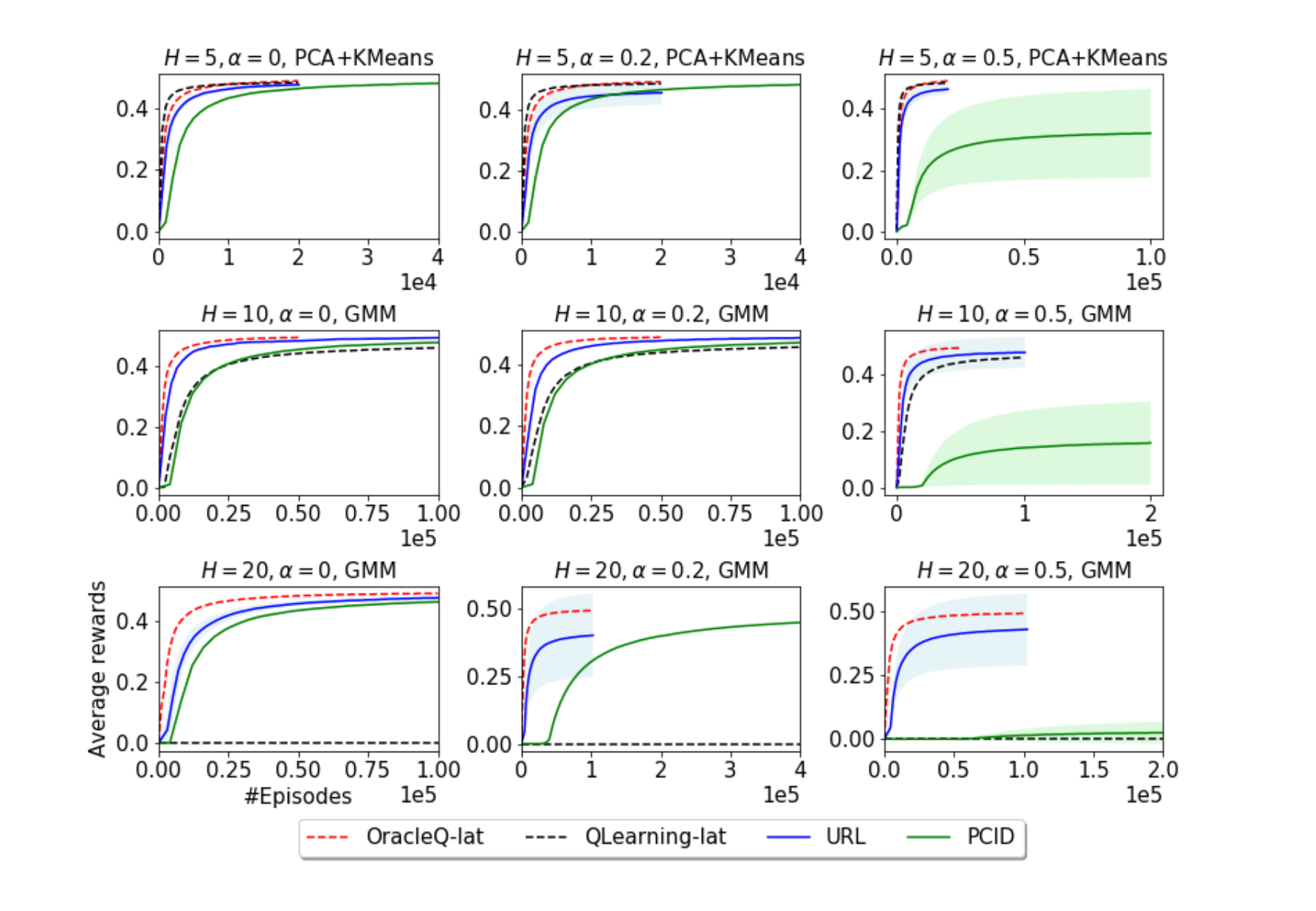}
	\caption{Performances for LockGaussian, $\sigma=0.2$. All lines are mean values of $50$ tests and the shaded areas depict the one standard deviations. OracleQ-lat and QLearning-lat have direct access to the latent states, which are not for practical use. URL and PCID only have access to the observations. OracleQ-obs and QLearning-obs are omitted due to infinitely many observations.}
	\label{fig:gau2}
\end{figure}

In this section we conduct experiments to demonstrate the effectiveness of our framework. Our code is available at \href{https://github.com/FlorenceFeng/StateDecoding}{\texttt{https://github.com/FlorenceFeng/StateDecoding}}.

\paragraph{Environments}
We conduct experiments in two environments: LockBernoulli and LockGaussian.
These environments are also studied in \cite{du2019provably}, which are designed to be hard for exploration.
Both environments have the same latent state structure with $H$ levels, 3 states per level and 4 actions. At level $h$, from states $s_{1,h}$ and $s_{2,h}$ one action leads with probability $1-\alpha$ to $s_{1,h+1}$ and with probability $\alpha$ to $s_{2,h+1}$, another has the flipped behavior, and the remaining two lead to $s_{3,h+1}$. All actions from $s_{3,h}$ lead to $s_{3,h+1}$. Non-zero reward is only achievable if the agent can reach $s_{1,H+1}$ or $s_{2,H+1}$ and the reward follows Bernoulli(0.5). Action labels are randomly assigned at the beginning of each time of training. We consider three values of $\alpha$: 0, 0.2, and 0.5.

In LockBernoulli, the observation space is $\{0,1\}^{H+3}$ where the first 3 coordinates are reserved for the one-hot encoding of the latent state and the last $H$ coordinates are drawn i.i.d from Bernoulli(0.5). LockBernoulli meets our requirements as a BMDP. In LockGaussian, the observation space is $\RR^{H+3}$. Every observation is constructed by first letting the first three coordinates be the one-hot encoding of the latent state, then adding i.i.d Gaussian noises $\cN(0,\sigma^2)$ to all $H+3$ coordinates. We consider $\sigma=0.1$ and $0.2$. LockGaussian is not a BMDP. We use this environment to evaluate the robustness of our method to violated assumptions.

The environments are designed to be hard for exploration. There are in total $4^H$ choices of actions of one episode, but only $2^H$ of them lead to non-zero reward in the end. So random exploration requires exponentially many trajectories. Also, with a larger $H$, the difficulty of learning accurate decoding functions increases and makes exploration with observations a more challenging task.

\paragraph{Algorithms and Hyperparameters}
We compare 4 algorithms: OracleQ \citep{jin2019provably}; QLearning, the tabular Q-Learning with $\epsilon$-greedy exploration; URL, our method; and PCID \citep{du2019provably}. For OracleQ and QLearning, there are two implementations: 1. they directly see the latent states (OracleQ-lat and QLearning-lat); 2. only see observations (OracleQ-obs and QLearning-obs). For URL and PCID, only observations are available. OracleQ-lat and QLearning-lat are served as a near-optimal skyline and a sanity-check baseline to measure the efficiency of observation-only algorithms. OracleQ-obs and QLearning-obs are only tested in LockBernoulli since there are infinitely many observations in LockGaussian. For URL, we use OracleQ as the tabular RL algorithm. Details about hyperparameters and unsupervised learning oracles in URL can be found in Appendix \ref{app:experiment}.

\paragraph{Results}
The results are presented in Figure~\ref{fig:ber},~\ref{fig:gau1}, and~\ref{fig:gau2}. $x$-axis is the number of training trajectories and $y$-axis is average reward.
All lines are mean values of $50$ tests and the shaded areas depict the one standard deviations. The title for each subfigure records problem parameters and the unsupervised learning method we apply for URL.
In LockBernoulli, OracleQ-obs and QLearning-obs are far from being optimal even for small-horizon cases. URL is mostly as good as the skyline (OracleQ-lat) and much better than the baseline (QLearning-lat) especially when $H=20$. URL outperforms PCID in most cases. When $H=20$, we observe a probability of $80\%$ that URL returns near-optimal values for $\alpha=0.2$ and $0.5$. In LockGaussian, 
for $H= 20$, we observe a probability of $>75\%$ that URL returns a near-optimal policy for $\alpha= 0.2$ and 0.5. 


\section{Conclusion}\label{sec:con}
The current paper gave a general framework that turns an unsupervised learning algorithm and a no-regret tabular RL algorithm into an algorithm for RL problems with huge observation spaces.
We provided theoretical analysis to show it is provably efficient.
We also conducted numerical experiments to show the effectiveness of our framework in practice.
An interesting future theoretical direction is to characterize the optimal sample complexity under our assumptions.

\section*{Broader Impact}
Our research broadens our understanding on the use of unsupervised learning for RL, which in turn can help researchers design new principled algorithms and incorporate safety considerations for more complicated problems.

We do not believe that the results in this work will cause any ethical issue, or put anyone at a disadvantage in the society.
\begin{ack}
Fei Feng was supported by AFOSR MURI FA9550-18-10502 and ONR N0001417121.
This work was done while Simon S. Du was at the Institute for Advanced Study and he was supported by NSF grant DMS-1638352 and the Infosys Membership.
\end{ack}

\bibliographystyle{apalike}
\bibliography{references}  

\newpage
\appendix

\cut{\begin{algorithm}[t]
	\caption{A Unified Framework for Unsupervised RL}
	\label{alg:framework}
	\begin{algorithmic}[1]
		\State \textbf{Input:} BMDP $\cM$; $\cU\cL\cO$; $(\varepsilon, \delta)$-correct episodic no-regret algorithm $\sA$; batch size $B>0$; $\varepsilon\in(0,1)$; $\delta\in(0,1)$; $N:=\lceil\log(2/\delta)/2\rceil$; $L:=\lceil9H^2/(2\varepsilon^2)\log(2N/\delta)\rceil$.
		\For {$n=1$ {\bfseries to } $N$ }
		\State Clear the memory of $\sA$ and restart;
		\For {episode $k=1$ {\bfseries to } $K$ }
		\State Obtain $\pi^k$ from $\sA$;
		\State Obtain a trajectory: 
		$\tau^k, f_{[H+1]}^{k}\gets \tsr(\ulo, \pi^k, B)$;\label{line:calltsr}
		\State Update the algorithm:
		$\sA\gets \tau^k$; 
		\EndFor
		\State Obtain $\pi^{K+1}$ from $\sA$;
		\State Finalize the decoding functions: $\tau^{K+1}, f_{[H+1]}^{K+1}\gets\texttt{TSR}(\cU\cL\cO, \pi^{K+1}, B)$; 
		\State Construct a policy for $\cM: \phi^n\gets\pi^{K+1}\circ f_{[H+1]}^{K+1}$.
		\EndFor
		\State Run each $\phi^n~ (n\in[N])$ for $L$ episodes and get the average rewards per episode $\bar{V}_1^{\phi^n}$.
		\State Output a policy $\phi\in\argmax_{\phi\in\phi^{[N]}} \bar{V}_1^{\phi}$.
	\end{algorithmic}
\end{algorithm}
\begin{algorithm}[t]
	\caption{Trajectory Sampling Routine \texttt{TSR} ($\cU\cL\cO, \pi, B$)}
	\label{alg:tsr}
	\begin{algorithmic}[1]
		\State \textbf{Input:} $\cU\cL\cO$; a tabular RL policy $\pi$; batch size $B$; $\epsilon\in(0,1)$; $\delta_1\in(0,1)$; $J:=(H+1)|\cS|+1$.
		\State \textbf{Data:} all training data $\cD$ from previous runs;  label standard data $\cZ:=\{\cZ_{1}, \cZ_{2}, \ldots \cZ_{H+1}\}$, 
		$\cZ_h:=\{\cD_{h,s_1}, \cD_{h,s_2},\ldots\}$; the latest decoding functions $f^{0}_{[H+1]}$.
		\For {$i=1$ {\bfseries to } $J$ }
		\State Run $\pi\circ {f}_{[H+1]}^{i-1}$ in the BMDP environment to get $B$ trajectories as training data ${\cD'}$ and another $B$ trajectories as testing data $\cD''$; \label{line:tsr_generate}
		\State Combine training data: $\cD\gets\cD\cup{\cD'}$;
		\State Train with $\ulo$: $\tilde{f}_{[H+1]}^{i}\gets\ulo(\cD)$;\label{line:trainulo}
		\State Match labels:
		$f^{i}_{[H+1]}\gets \text{FixLabel}(\tilde{f}_{[H+1]}^{i}, \cZ)$; \label{line:tsr_label}
		\For {$h\in[H+1]$}
		\State 
		Let $\cD''_{h,s} := \{x\in \cD''_{h} : f_h^{i}(x) =s, s\in \cS_h\}$;\label{line:D''s}
		\State Update $\cZ$: if $\cD_{h,s}\not\in \cZ_h$ and $|\cD''_{h,s}|\ge 3\epsilon \cdot B\log(\delta_1^{-1})$, then let $\cZ_h\gets\cZ_h\cup \{\cD''_{h,s}\}$\label{line:update}
		\EndFor
		\EndFor
		\State Run $\pi\circ f_{[H+1]}^{J}$ to obtain a trajectory $\tau$;
		\State Renew $f^{0}_{[H+1]}\gets f_{[H+1]}^{J}$;
		\State \textbf{Output:} $\tau, f_{[H+1]}^{J}$.
	\end{algorithmic}
\end{algorithm}
\begin{algorithm}[t]
	\caption{FixLabel($\tilde{f}_{[H+1]}, \cZ$)}
	\label{alg:label}
	\begin{algorithmic}[1]
		\State \textbf{Input:} a set of decoding functions $\tilde{f}_{[H+1]}$; a set of label standard data $\cZ:=\{\cZ_1, \cZ_2, \cdots, \cZ_{H+1}\}$, $\cZ_h:=\{\cD_{h,s_1}, \cD_{h,s_2}, \ldots\}$.
		\For {$h\in[H+1]$}
		\For {$\cD_{h,s}\in \cZ_h$ }
		\If{$s\in \cS_h$ and $|\{x\in \cD_{h,s}: \tilde{f}_{h}(x) = s'\}|> 3/5|\cD_{h,s}|$}
		\State Swap the output of $s'$ with $s$ in $\tilde{f}_h$;
		\EndIf
		\EndFor
		\EndFor
		\State{\bfseries Output:} $\tilde{f}_{[H+1]}$
	\end{algorithmic}
\end{algorithm}
}

\section{Proofs for the Main Result}\label{app:proof}
We first give a sketch of the proof. Note that if \tsr~always correctly simulates a trajectory of $\pi^k$ on the underlying MDP, then by the correctness of $\sA$, the output policy of $\sA$ in the end is near-optimal with high probability. If in $\tsr$, $f^{k, J}_{[H+1]}$ decodes states correctly (up to a fixed permutation, with high probability) for every observation generated by playing $\pi^k\circ f^{k, J}_{[H+1]}$, then the obtained trajectory (on $\cS$) is as if obtained with $\pi^k\circ f_{[H+1]}$ which is essentially equal to playing $\pi^k$ on the underlying MDP.
Let us now consider $\pi^k\circ f^{k, i}_{[H+1]}$ for some intermediate iteration $i\in[J]$.
If there are many observations from a previously unseen state, $s$, 
then $\ulo$ guarantees that all the decoding functions in future iterations will be correct with high probability of identifying observations of $s$. Since there are at most $|\cS|$ states to reach for each level following $\pi^k$, after $(H+1)|\cS|$ iterations, \tsr~ is guaranteed to output a set of decoding functions that are with high probability correct under policy $\pi^k$.
With this set of decoding functions, we can simulate a trajectory for $\sA$ as if we know the true latent states.

For episode $k$, we denote the training dataset $\cD$ generated by running $\unif(\Pi)$ as $\{\cD_{k,i,h}\}_{h=1}^{H+1}$ (Line~\ref{line:tsr_generate1}) and the testing dataset $\cD''$ generated by $\pi^k\circ f^{k,i-1}_{[H+1]}$ as $\{\cD''_{k,i,h}\}_{h=1}^{H+1}$ (Line~\ref{line:tsr_generate2}). The subscript $h$ represents the level of the observations. Furthermore, we denote by $\mu_{k,i,h}(\cdot)$ the distribution over hidden states at level $h$ induced by $\pi^k\circ f^{k,i-1}_{[H+1]}$.
To formally prove the correctness of our framework, we first present the following lemma, showing that whenever some policy
$\pi$ with some decoding functions visits a state $s$ with relatively high probability, all the decoding functions of later iterations will correctly decode the observations from $s$ with high probability.
\begin{lemma}
\label{lem:label}
Suppose for some $s^*\in \cS_h$, $(k,i)$ is the earliest pair such that 
$\big|\{x\in\cD''_{k,i,h} : f_{h}^{k, i}(x) =\alpha_h(s^*)\}|\ge 3\epsilon\cdot B\log({\delta_1^{-1}})$ and $\{x\in\cD''_{k,i,h} : f_{h}^{k, i}(x) = \alpha_h(s^*)\}$ is added into $\cZ_h$ as $\cD_{h,\alpha_h(s^*)}$ at line \ref{line:update} Algorithm \ref{alg:tsr}, where $\alpha_h$ is a good permutation between $f^{k,i}_h$ and $f_h$. Then for each $(k',i')> (k, i)$ (in lexical order), with probability at least 
$1-\cO(\delta_1)$,
\[
\Pr_{x\sim q(\cdot|s^*)}\big[f^{k',i'}_h(x)
\neq \alpha_h^*(s^*)\big]\le \epsilon
\]
provided $0<\epsilon\log(\delta_1^{-1})\leq 0.1$ and $B\ge B_0$.
Here $B_0$ is some constant to be determined later and $\alpha_h^*$ is some fixed permutation on $\cS_h$.
\end{lemma}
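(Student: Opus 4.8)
The plan is to track a single triggering state $s^*$ and show that once it enters the standard set $\cZ_h$ at iteration $(k,i)$, the policy $\pi^k\circ f^{k,i-1}_{[H+1]}$ that caused this remains in $\Pi$ for all later iterations and keeps $s^*$ sufficiently well-represented in every subsequent training mixture, so the oracle keeps decoding $q(\cdot\mid s^*)$ accurately; FixLabel then pins the label to $\alpha_h^*(s^*):=\alpha_h(s^*)$. I would organize the argument into three steps: (i) a visitation lower bound $\mu_{k,i,h}(s^*)\gtrsim\epsilon\log(\delta_1^{-1})$; (ii) oracle accuracy on $q(\cdot\mid s^*)$ at every $(k',i')>(k,i)$; and (iii) correctness of the label swap inside FixLabel.

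For step (i) I would condition on $\ulo$ succeeding at $(k,i)$, so $f^{k,i}_h$ has a good permutation $\alpha_h$ with error at most $g(n_{k,i},\delta_1)$ on the \emph{training} mixture $\unif(\Pi)$, where $n_{k,i}=((k-1)J+i)B$ is the oracle's sample size. Because the test policy $\pi^k\circ f^{k,i-1}_{[H+1]}$ is one of the $|\Pi|$ equally weighted components, the per-state decomposition of the oracle guarantee turns this into an error at most $|\cS|\,|\Pi|\,g(n_{k,i},\delta_1)$ on the test distribution $\mu_{k,i,h}$. The triggering hypothesis says the empirical frequency of $\{f^{k,i}_h=\alpha_h(s^*)\}$ in $\cD''_{k,i,h}$ is at least $3\epsilon\log(\delta_1^{-1})$; a Hoeffding bound over the $B$ test trajectories converts this to a true probability at least $2\epsilon\log(\delta_1^{-1})$, and subtracting the decoding error gives $\mu_{k,i,h}(s^*)\ge\epsilon\log(\delta_1^{-1})$, valid once $B\ge B_0$ makes $|\cS|\,|\Pi|\,g$ negligible.

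For step (ii), since $\pi^k\circ f^{k,i-1}_{[H+1]}$ never leaves $\Pi$, at any later $(k',i')$ the mixture assigns $s^*$ mass at least $\mu_{k,i,h}(s^*)/|\Pi_{k',i'}|\ge\epsilon\log(\delta_1^{-1})/|\Pi_{k',i'}|$. Feeding this into the oracle guarantee and dividing by the mass bounds $\Pr_{x\sim q(\cdot\mid s^*)}[\tilde{f}^{k',i'}_h(x)\ne\alpha_h^{k',i'}(s^*)]\le|\Pi_{k',i'}|\,g(n_{k',i'},\delta_1)/(\epsilon\log(\delta_1^{-1}))$, where $\alpha_h^{k',i'}$ is the a priori unknown oracle permutation at that iteration. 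This is at most $\epsilon$ once $B\ge B_0$: the total number of iterations is polynomial (at most $K\cdot J$ with $K$ the polynomial sample complexity of $\sA$ and $J=(H+1)|\cS|+1$), so $|\Pi_{k',i'}|$ is polynomial, and the polynomial bound on $g^{-1}$ lets a polynomial $B_0$ drive $g(n_{k',i'},\delta_1)$ below $\epsilon^2\log(\delta_1^{-1})/|\Pi_{k',i'}|$. For step (iii), recall $\cZ_h$ stores $\cD_{h,\alpha_h(s^*)}$, which by step (i)'s accuracy consists (with high probability) mostly of genuine $q(\cdot\mid s^*)$ samples; since $\tilde{f}^{k',i'}_h$ decodes such samples as $\alpha_h^{k',i'}(s^*)$ with error at most $\epsilon\le0.1$, a concentration bound on the $\Omega(\log(\delta_1^{-1}))$ fixed samples in $\cD_{h,\alpha_h(s^*)}$ shows more than $3/5$ of them carry label $\alpha_h^{k',i'}(s^*)$, so FixLabel swaps that label to $\alpha_h(s^*)=\alpha_h^*(s^*)$; the $\le\epsilon$ error survives the relabeling, which is the claim. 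A union bound over the $\cO(1)$ oracle-failure and Hoeffding events yields probability $1-\cO(\delta_1)$.

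The main obstacle is the tension between the mixture distribution on which the oracle is accurate and the single conditional $q(\cdot\mid s^*)$ that the conclusion concerns: the reweighting loses a factor $|\Pi|$, so the whole argument only closes because the iteration count, and hence $|\Pi|$, is polynomially bounded and can be absorbed by a polynomial $B_0$. The most delicate verification is step (iii): I must argue that the \emph{fixed, previously stored} sample $\cD_{h,\alpha_h(s^*)}$ is clean and that the independently trained new decoder fires the $3/5$ threshold on it, which I would handle by a fresh-sample independence argument together with the constraint $\epsilon\log(\delta_1^{-1})\le0.1$ that keeps the correct-decoding fraction comfortably above $3/5$.
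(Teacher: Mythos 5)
Your proposal is correct and takes essentially the same route as the paper's own proof: the identical three-step structure of (i) deriving $\mu_{k,i,h}(s^*)\ge\epsilon\log(\delta_1^{-1})$ from the triggering condition, (ii) converting the oracle's guarantee on the uniform policy mixture into per-state accuracy on $q(\cdot\mid s^*)$ by paying the polynomial reweighting factor $|\Pi|$ (the paper's $k'J$) and dividing by the visitation mass, and (iii) using the stored standard set $\cD_{h,\alpha_h(s^*)}$, which is at least a $2/3$-fraction genuine $s^*$ samples, to fire the $3/5$ threshold in FixLabel and pin $\alpha_h^*(s^*)=\alpha_h(s^*)$, with $B_0$ made polynomial through the polynomial bounds on $g^{-1}$ and the iteration count. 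The only deviations are cosmetic: your step (i) carries a harmless extra $|\cS|$ factor and applies concentration before subtracting the decoding error rather than after, exactly as the paper does in the reverse order.
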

\begin{proof}[Proof of Lemma~\ref{lem:label}]
For iterations $(k',i')\ge (k, i)$, the function $\tilde{f}_{h}^{k',i'}$ is obtained by applying $\ulo$ on the dataset generated 
by $$\mu':=\unif(\{\mu_{k'',i'',h}\}_{(k'',i'')< (k', i')})$$ and the dataset has size $\big((k'-1)\cdot J+i'\big)\cdot B = \Theta(k'JB)$. 
Thus, with probability at least $1-\delta_1$, for some permutation $\alpha_h'$,
\begin{align}\label{eq:prob_mu'}
\Pr_{s\sim \mu',x\sim q(\cdot|s)}&\big[\tilde{f}_{h}^{k',i'}(x)
\neq \alpha_h'\circ f_h(x)\big]
\le g\big(\Theta(k'JB), \delta_1\big).
\end{align}
By taking 
\begin{align}\label{eq:choiceB}
B_0:&=\Theta \Big(\frac{g^{-1}(\epsilon^2/(K\cdot J),\delta_1)}{K\cdot J}\Big),
\end{align}
we have when $B\geq B_0$, $g\big(\Theta(k'JB),\delta_1)\leq\epsilon^2/(K\cdot J)$ for all $k'\in[K]$. Later, in Proposition \ref{prop:main}, we will show that $B_0=\poly(|\cS|, |\cA|, H, 1/\varepsilon)$. Now we consider ${f}_{h}^{k,i}$. Since the FixLabel routine (Algorithm \ref{alg:label}) does not change the accuracy ratio, from Equation \eqref{eq:prob_mu'}, it holds with probability at least $1-\delta_1$ that
\begin{align}
    \Pr_{s\sim\mu_{k,i,h},x\sim q(\cdot|s)}[f^{k,i}_h(x)\neq \alpha_h\circ f_h(x)]\leq k\cdot J\cdot g\big(\Theta(kJB), \delta_1\big)\leq \epsilon.
\end{align}
Therefore, by Chernoff bound, with probability at least $1-\cO(\delta_1)$,
\begin{align}
\big|\{x\in \cD''_{k, i, h}: f_h(x)\neq s &\text{ and } {f}_{h}^{k,i}(x) = \alpha_h(s)
\}\big| 
< ~\epsilon\cdot B\log({\delta_1^{-1}}).
\end{align}
Since $\big|\{x\in\cD''_{k,i,h}: f_{h}^{k, i}(x) = \alpha_h(s^*)\}|\ge 3\epsilon\cdot B\log({\delta_1^{-1}})$,
we have that
\begin{align}\label{eq:Ds*}
\big|\{x\in\cD''_{k,i,h}: ~f_h(x)= s^* \text{ and } {f}_{h}^{k,i}(x) = \alpha_h(s^*)\}|
> &~\frac23\cdot \big|\{x\in\cD''_{k,i,h}: f_{h}^{k, i}(x) = \alpha_h(s^*)\}\big|\\
\geq &~ 2\epsilon\cdot B\log(\delta_1^{-1}).
\end{align}
Thus, by Chernoff bound, with probability at least $1-\cO(\delta_1)$, 
$
\mu_{k, i,h}(s^*)
\ge \epsilon\cdot \log(\delta_1^{-1}).$ Also note that $f^{k,i}_{h}$ is the first function that has confirmed on $s^*$ (i.e., no $\cD_{h,\alpha_h(s^*)}$ exists in $\cZ_h$ of line 8 at iteration $(k,i)$). By Line \ref{line:D''s} and Line \ref{line:update}, for later iterations, in $\cZ_h$, $\cD_{h,\alpha_h(s^*)}=\{x\in\cD''_{k,i,h}:f^{k,i}_h(x)=\alpha_h(s^*)\}$.

Next, for another $(k',i')>(k,i)$, we let the corresponding permutation be $\alpha_h'$ for $\tilde{f}_h^{k', i'}$. 
Since $\mu'(s') \ge  \mu_{k,i,h}(s') / (k'\cdot J)$, with probability at least $1-\delta_1$,
\begin{align}
\Pr_{s\sim \mu_{k,i,h},x\sim q(\cdot|s)}\big[\tilde{f}_{h}^{k',i'}(x)
\neq \alpha_h'\circ f_h(x)\big]
\le ~k'\cdot J \cdot g(\Theta(k'JB), \delta_1).
\end{align}
Notice that
\begin{align*}
\Pr_{s\sim \mu_{k,i,h},x\sim q(\cdot|s)}\big[\tilde{f}_{h}^{k',i'}(x)
\neq \alpha_h'\circ f_h(x)\big]
= &\sum_{s'\in \cS_h}\mu_{k,i,h}(s') \Pr_{x\sim q(\cdot|s')}\big[\tilde{f}_{h}^{k',i'}(x)\neq \alpha_h'\circ f_h(x)\big]\\
\ge &~\mu_{k,i,h}(s^*)\Pr_{x\sim q(\cdot|s^*)}\big[\tilde{f}_{h}^{k',i'}(x)
\neq \alpha_h'\circ f_h(x)\big]\\
\ge &~\epsilon\cdot \log(\delta^{-1}) \Pr_{x\sim q(\cdot|s^*)}\big[\tilde{f}_{h}^{k',i'}(x)
\neq \alpha_h'\circ f_h(x)\big].
\end{align*}
Thus, with probability at least $1-\delta_1$,
\begin{align}
\Pr_{x\sim q(\cdot|s^*)}&\big[\tilde{f}_{h}^{k',i'}(x)
\neq \alpha_h'\circ f_h(x)\big]
\le \frac{k'\cdot J \cdot g(\Theta(k'JB), \delta_1)}{\epsilon\cdot \log(\delta_1^{-1})}
\le \epsilon
\end{align}
with $B\geq B_0$ and $B_0$ as defined in Equation \eqref{eq:choiceB}.
Let $s':=\alpha_h'(s^*)$. 
Conditioning on $\ulo$ being correct on $\tilde{f}^{k', i'}_{[H+1]}$ and $f^{k,i}_{[H+1]}$, by Chernoff bound and Equation \eqref{eq:Ds*}, with probability at least $1-\cO(\delta_1)$, we have
\begin{align*}
\big|\{x\in\cD_{h,\alpha_h(s^*)}:  \tilde{f}_{h}^{k',i'}(x) = s'\}\big|
\ge &~ \big|\{x\in\cD_{h,\alpha_h(s^*)}:  f_h(x)=s^*, \tilde{f}_{h}^{k',i'}(x) = s'\}\big|\\
\ge &~(1-\epsilon \cdot\log(\delta_1^{-1}))\cdot\frac{2}{3}\cdot 
\big|\cD_{h,\alpha_h(s^*)}\big|
> \frac{3}{5} \big|\cD_{h,\alpha_h(s^*)}\big|,
\end{align*}
where the fraction $\frac{2}{3}$ follows from Equation \eqref{eq:Ds*} and we use the fact that $\cD''_{k,i,h}$ are independent from the training dataset.
By our label fixing procedure, we find a permutation that swaps $s'$ with $s$ 
for $\tilde{f}_{h}^{k',i'}$ 
to obtain ${f}_{h}^{k',i'}$.
By the above analysis, with probability at least $1-\cO(\delta_1)$,
$
\Pr_{x\sim q(\cdot|s^*)}\big[f^{k',i'}_h(x)
\neq \alpha_h(s^*)\big]\le \epsilon
$
as desired.
Consequently, we let $\alpha_h^*(s^*) = \alpha_h(s^*)$,
which satisfies the requirement of the lemma.
\end{proof}
Next, by the definition of our procedure of updating the label standard dataset (Line \ref{line:update}, Algorithm \ref{alg:tsr}), we have the following corollary.
\begin{corollary}\label{coro:1}
Consider Algorithm~\ref{alg:tsr}.
Let $\cZ_{k, i,h}$ be the label standard dataset at episode $k$ before iteration $i$ for $\cS_h$.
Then,
with probability at least $1-\cO(H|\cS|\delta_1)$,
\begin{align}
\text{for all } k, i \text{ and }
\cD_{h,s}\in \cZ_{k, i,h},
|\{x\in \cD_{h,s}: \alpha_h^* \circ f_h(x) = s, s\in \cS_h\}|> 2/3 |\cD_{h,s}|.
\end{align}
\end{corollary}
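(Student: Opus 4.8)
The plan is to observe that the per-entry inequality in the corollary is, after relabeling by the fixed permutation $\alpha_h^*$, exactly Equation~\eqref{eq:Ds*} that was already established inside the proof of Lemma~\ref{lem:label}, and then to promote the single-entry bound to the ``for all'' statement by a union bound over every dataset that can ever enter $\cZ$.

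First I would recall the mechanism by which an entry $\cD_{h,s}$ is created. By Line~\ref{line:D''s} and Line~\ref{line:update} of Algorithm~\ref{alg:tsr}, a set $\cD_{h,s}$ is added to $\cZ_h$ exactly once, at the earliest pair $(k,i)$ at which the number of $x\in\cD''_{k,i,h}$ with $f^{k,i}_h(x)=s$ reaches the threshold $3\epsilon\cdot B\log(\delta_1^{-1})$; explicitly $\cD_{h,s}=\{x\in\cD''_{k,i,h}: f^{k,i}_h(x)=s\}$, and it is frozen thereafter. Hence ``for all $k,i$'' is automatic once the bound is shown at the moment of creation, and it suffices to control each first-confirmation event separately. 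Fixing such a confirmed label $s$ at level $h$, I would set $s^*:=(\alpha_h^*)^{-1}(s)$, the latent state whose label was fixed to $s$; by the construction at the end of Lemma~\ref{lem:label} we have $\alpha_h^*(s^*)=s=\alpha_h(s^*)$, where $\alpha_h$ is the good permutation at the confirming iteration. Equation~\eqref{eq:Ds*} then gives, with probability at least $1-\cO(\delta_1)$, that $|\{x\in\cD_{h,s}: f_h(x)=s^*\}|>\tfrac23|\cD_{h,s}|$. Since $\alpha_h^*$ is a permutation with $\alpha_h^*(s^*)=s$, the event $\{f_h(x)=s^*\}$ is identical to $\{\alpha_h^*\circ f_h(x)=s\}$, which is precisely the asserted per-entry bound.

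Finally I would take a union bound. There are at most $|\cS_h|\le|\cS|$ distinct labels per level and $H+1$ levels, so at most $(H+1)|\cS|$ entries can ever be added across all the $\cZ_h$, each carrying failure probability $\cO(\delta_1)$; union-bounding yields the claimed $1-\cO(H|\cS|\delta_1)$. The one point needing care --- and the main (if modest) obstacle --- is justifying that a \emph{single} fixed permutation $\alpha_h^*$ serves every confirmed state at level $h$, so that the expression $\alpha_h^*\circ f_h$ in the statement is unambiguous. This rests on the FixLabel routine (Algorithm~\ref{alg:label}): because it forces every later decoding function to agree with the labels already frozen in $\cZ_h$, the per-state values $\alpha_h^*(s^*)$ produced at distinct confirmation times are mutually consistent and assemble into one permutation of $\cS_h$, which is exactly what lets the individual applications of Equation~\eqref{eq:Ds*} be stated uniformly and combined.
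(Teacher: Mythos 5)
Your proposal is correct and matches the paper's reasoning: the paper gives no explicit proof, asserting the corollary follows directly from Lemma~\ref{lem:label} and the update rule of Line~\ref{line:update}, and your argument --- the per-entry $2/3$ purity bound at creation time is exactly Equation~\eqref{eq:Ds*} (rewritten via $\alpha_h^*(s^*)=\alpha_h(s^*)=s$), entries are frozen once added so the bound persists for all later $(k,i)$, and a union bound over the at most $(H+1)|\cS|$ states that can ever be confirmed gives $1-\cO(H|\cS|\delta_1)$ --- is precisely that intended justification spelled out. Your closing remark on why the per-state definitions of $\alpha_h^*$ assemble consistently (each label is claimed at most once because confirmation requires $\cD_{h,s}\not\in\cZ_h$, and FixLabel aligns later functions with frozen labels) is also faithful to the paper's construction.
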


At episode $k$ and iteration $i$ of the algorithm \tsr, let $\cE_{k,i}$ be the event that
for all $h\in[H+1], \cD_{h,s}\in \cZ_{k,i,h}$, $\Pr_{x\sim q(\cdot|s)}\big[f^{k,i}_h(x)
\neq \alpha_h^*\circ f_h(x)\big]\le \epsilon$.
We have the following corollary as a consequence of Lemma~\ref{lem:label} by taking the union bound over all states.
\begin{corollary} $\forall k,i:\quad \Pr\big[\cE_{k,i}\big] \ge 1-\cO(H|\cS|\delta_1).$
\end{corollary}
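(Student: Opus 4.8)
The plan is to obtain this corollary as an immediate consequence of Lemma~\ref{lem:label} together with a union bound over the confirmed states, since the substantive probabilistic work is already contained in the lemma. First I would fix an arbitrary pair $(k,i)$ and unpack $\cE_{k,i}$: it asserts that for every level $h$ and every $\cD_{h,s}\in\cZ_{k,i,h}$ the current decoder obeys $\Pr_{x\sim q(\cdot|s)}[f^{k,i}_h(x)\neq\alpha_h^*\circ f_h(x)]\le\epsilon$. The key structural fact is that $\cD_{h,s}$ can only have entered $\cZ_{k,i,h}$ at some confirmation pair strictly earlier than $(k,i)$ (via Line~\ref{line:update}); hence for each such state the hypotheses of Lemma~\ref{lem:label} are satisfied with confirmation time preceding $(k,i)$, and instantiating the lemma's conclusion at $(k',i')=(k,i)$ gives precisely the required per-state bound, with failure probability $\cO(\delta_1)$.

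Next I would count the relevant states and union-bound. For $x\sim q(\cdot|s)$ the true decoder gives $f_h(x)=s$ almost surely by the disjoint-block property, so $\alpha_h^*\circ f_h(x)=\alpha_h^*(s)$ and the per-state event in $\cE_{k,i}$ coincides with the event controlled by Lemma~\ref{lem:label}. At each of the $H+1$ levels there are at most $|\cS_h|\le|\cS|$ states that can ever be confirmed, so $\cZ_{k,i,h}$ over all $h$ contains at most $(H+1)|\cS|$ datasets. Summing the $\cO(\delta_1)$ failure probabilities over these at most $(H+1)|\cS|$ states yields total failure probability $\cO(H|\cS|\delta_1)$, whence $\Pr[\cE_{k,i}]\ge 1-\cO(H|\cS|\delta_1)$; since $(k,i)$ was arbitrary, the bound holds for all $k,i$.

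The part I expect to require the most care is justifying that a single permutation $\alpha_h^*$ per level serves all confirmed states simultaneously inside $\cE_{k,i}$. Lemma~\ref{lem:label} nominally outputs a permutation attached to each individual state, and one must argue that the FixLabel relabeling against the shared standard set $\cZ_h$ forces these choices to cohere into one per-level permutation — which is exactly the consistency property recorded on the standard data in Corollary~\ref{coro:1}. Once that coherence is granted, the remaining argument is pure bookkeeping: the count of at most $(H+1)|\cS|$ confirmed states and an elementary union bound.
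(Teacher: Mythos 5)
Your proposal is correct and takes essentially the same route as the paper, which obtains this corollary directly from Lemma~\ref{lem:label} by a union bound over the at most $(H+1)|\cS|$ states that can ever be confirmed into the label standard sets. Your closing remark about the coherence of the single per-level permutation $\alpha_h^*$ (defined state by state at confirmation time and propagated by FixLabel) makes explicit a point the paper leaves implicit, but it does not alter the argument.
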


The next lemma shows that after $(H+1)|\cS|+1$ iterations of the \tsr~subroutine, the algorithm outputs a trajectory for the algorithm $\sA$ as if it knows the true mapping $f_{[H+1]}$.
\begin{lemma}
\label{lem:tsr}
Suppose in an episode $k$, we are running algorithm $\tsr$.
Then after $J=(H+1)|\cS|+1$ iterations, we have, for every $j\ge J$,
with probability at least $1-\cO(H|\cS|\delta_1)$,
\begin{align}\label{eq:good}
\text{for all }h\in[H+1], \Pr_{s\sim \mu_{k,j+1,h}, x\sim q(\cdot|s)}
\big[f_h^{k, j}(x)\neq \alpha_h^* \circ f_h(x)\big]\le \epsilon'
\end{align}
for some small enough $\epsilon$ and $50 H\cdot \epsilon\cdot |\cS|\cdot \log(\delta_1^{-1})<\epsilon'<1/2$,
provided $B\ge B_0$ as defined in Lemma \ref{lem:label}.
\end{lemma}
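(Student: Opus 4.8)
The plan is to show that, once the confirmation mechanism of \tsr\ has exhausted its budget of at most $(H+1)|\cS|$ possible labels, the decoding functions become \emph{self-consistent}: the decoder $f^{k,j}_{[H+1]}$ sitting inside the policy $\pi^k\circ f^{k,j}_{[H+1]}$ correctly decodes exactly the states that this same policy leads the agent to visit. First I would condition on the high-probability events of Lemma~\ref{lem:label} together with the events $\cE_{k,i}$, taking a union bound over all $(H+1)|\cS|$ pairs $(h,s)$; this is what produces the $\cO(H|\cS|\delta_1)$ failure probability and guarantees that every \emph{confirmed} state (one whose label was added to $\cZ_h$ at Line~\ref{line:update}) is decoded by every later decoder with per-state error $\Pr_{x\sim q(\cdot|s)}[f^{k,j}_h(x)\neq \alpha^*_h\circ f_h(x)]\le\epsilon$.

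The core of the argument is an induction on the level $h$ that tracks which states have been confirmed, run against the fixed visitation distribution $\nu_h$ obtained by running $\pi^k$ on the underlying MDP $\cM'$. The inductive claim is: once the decoders at all levels $<h$ are accurate on the states on which $\nu$ places significant mass, the induced distribution $\mu_{k,i,h}$ is close in total variation to $\nu_h$, so every state $s$ with $\nu_h(s)\ge 3\epsilon\log(\delta_1^{-1})$ appears with at least $3\epsilon B\log(\delta_1^{-1})$ samples in $\cD''_{k,i,h}$ and is therefore confirmed at that iteration; by Lemma~\ref{lem:label} the level-$h$ decoders are accurate from then on, which lets the induction advance to level $h+1$. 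Since each level can add at most $|\cS|$ new confirmed labels and there are $H+1$ levels, all states reachable under $\pi^k$ with probability exceeding the threshold are confirmed within $(H+1)|\cS|$ iterations; the extra ``$+1$'' in $J$ guarantees that the \emph{output} decoder $f^{k,J}_{[H+1]}$ is itself trained after every such confirmation, and by monotonicity of $\cZ$ the same conclusion persists for every $j\ge J$.

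It then remains to convert the per-state guarantees into the averaged bound on $\mu_{k,j+1,h}$. For $j\ge J$ all decoders are accurate, so I would propagate the decoding error across levels: a decoding mistake at some level $h'<h$ perturbs the chosen action and hence shifts the state distribution at all levels $>h'$, and summing these perturbations telescopes to give $\|\mu_{k,j+1,h}-\nu_h\|_{\TV}\le \cO\big(H\,|\cS|\,\epsilon\log(\delta_1^{-1})\big)$. Splitting the target probability over confirmed and unconfirmed states finishes the estimate: confirmed states each contribute at most $\epsilon$ (total $\le|\cS|\epsilon$), while each unconfirmed state carries $\nu_h$-mass below $3\epsilon\log(\delta_1^{-1})$ and there are at most $|\cS|$ of them, so their total mass is $\le 3|\cS|\epsilon\log(\delta_1^{-1})$; adding the distribution-shift term yields the stated $50H\epsilon|\cS|\log(\delta_1^{-1})<\epsilon'$.

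The step I expect to be the main obstacle is the apparent circularity: the quantity being bounded, $\Pr_{s\sim\mu_{k,j+1,h}}[\,\cdot\,]$, is evaluated under a distribution generated by the very decoder $f^{k,j}_{[H+1]}$ whose accuracy is in question. The way I would break this is precisely the level-by-level induction above, exploiting that $\mu_{k,j+1,h}$ depends only on the decoders at levels strictly below $h$; this decouples \emph{which distribution we evaluate on} from \emph{which decoder we evaluate}, so that at each level the confirmed-state accuracy of Lemma~\ref{lem:label} can be invoked without reference to the level-$h$ decoder itself. The secondary difficulty is bookkeeping the error-propagation constants across the $H$ levels tightly enough to remain below $\epsilon'$, which is exactly where the slack in the constant $50$ and the smallness hypothesis $50H\epsilon|\cS|\log(\delta_1^{-1})<\epsilon'$ are consumed.
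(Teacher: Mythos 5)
Your overall scaffolding (conditioning on Lemma~\ref{lem:label}, the union bound giving $\cO(H|\cS|\delta_1)$, confirmed states staying accurate forever) matches the paper, but your core engine — a level-by-level induction anchored to the \emph{true} visitation distribution $\nu_h$ of $\pi^k$ on $\cM'$ — has a genuine quantitative gap, and it is exactly at the step you flagged as the "main obstacle." Confirmation at Line~\ref{line:update} is triggered by counts in $\cD''_{k,i,h}$, which is drawn from $\mu_{k,i,h}$, not from $\nu_h$; so your inductive step needs every state with $\nu_h$-mass above the threshold $t=3\epsilon\log(\delta_1^{-1})$ to retain mass of order $\epsilon\log(\delta_1^{-1})$ under $\mu_{k,i,h}$, i.e.\ it needs $\|\mu_{k,i,h}-\nu_h\|_{\TV}\ll t$. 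But the only TV bound your hypotheses support is the first-divergence coupling bound $\sum_{h'<h}\bigl(\epsilon+|\cS|\,t\bigr)\approx H|\cS|\,t$: below level $h$, unconfirmed states each carry $\nu$-mass up to $t$, there are up to $|\cS|$ of them per level, and on them the decoder (whose \ulo{} guarantee is only up to an uncontrolled permutation $\alpha_h'$, which need not agree with $\alpha^*_h$ off the confirmed set) can be arbitrarily wrong. So the distribution shift is $H|\cS|$ times larger than the threshold it must be compared against; a state sitting at the threshold can have $\mu$-mass zero and never be confirmed. Raising the threshold is circular — the TV bound scales linearly with the threshold with slope $H|\cS|>1$ — and the only escape is level-dependent thresholds $t_h\gtrsim|\cS|\sum_{h'<h}t_{h'}$, which grow like $(1+|\cS|)^{h}$ and produce a final error of order $(1+|\cS|)^{H}\epsilon\log(\delta_1^{-1})$, incompatible with the stated requirement $50H\epsilon|\cS|\log(\delta_1^{-1})<\epsilon'$. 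Your concluding arithmetic ("unconfirmed states carry total $\nu_h$-mass $\le 3|\cS|\epsilon\log(\delta_1^{-1})$") silently assumes the broken inductive step.

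The paper's proof avoids any comparison to $\nu_h$. It runs a per-iteration dichotomy stated entirely in terms of the current self-generated distribution $\mu_{k,i+1,h}$: either (case 1) some level has error above $\epsilon'/(2H)$ under $\mu_{k,i+1,h}$, in which case some state $s^*$ has $\Pr[\text{error}]\cdot\mu_{k,i+1,h}(s^*)>\epsilon'/(2H|\cS|)$; that state cannot be already confirmed (else Lemma~\ref{lem:label} forces $\mu_{k,i+1,h}(s^*)>1$), so it is unconfirmed with $\mu$-mass at least $\epsilon'/(2H|\cS|)$ — and because the \ulo{} training mixture \emph{and} the testing data of iteration $i+1$ both contain this same $\mu_{k,i+1,h}$, the per-state error of $f^{k,i+1}_h$ on $s^*$ is $\le\epsilon^2\cdot 2H|\cS|/\epsilon'\le\epsilon/25$ and Chernoff confirms $s^*$ at iteration $i+1$, with no translation between distributions needed. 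Or (case 2) the error is small at every level, and then either all unconfirmed states have $\mu$-mass $\le\epsilon'/(8H|\cS|)$ — in which case a forward coupling between $\pi^k\circ f^{k,i}_{[H+1]}$ and $\pi^k\circ f^{k,j}_{[H+1]}$ (two \emph{learned}-decoder processes, compared to each other rather than to the truth; this is how the paper breaks the self-reference you worried about) shows the bound \eqref{eq:good} holds for all $j\ge i$ — or some unconfirmed state has large $\mu$-mass and is confirmed as in case 1. Pigeonhole over the at most $(H+1)|\cS|$ possible confirmations then yields the claim after $J=(H+1)|\cS|+1$ iterations. Note also that this dichotomy is what actually produces the $(H+1)|\cS|$ iteration count; your induction, if it worked, would confirm all significant states of a level in a single iteration, which is another sign it does not reflect the mechanism the algorithm relies on.
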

\begin{proof}[Proof of Lemma~\ref{lem:tsr}]
For $i<J$, there are two cases:
\begin{enumerate}
    \item there exists an $h\in[H+1]$ such that $\Pr_{s\sim \mu_{k,i+1, h}, x\sim q(\cdot|s)}\big[
f_h^{k, i}(x)\neq \alpha_h\circ f_h(x)
\big]
>\epsilon'/(2H)$;
\item for all $h\in[H+1]$, $\Pr_{s\sim \mu_{k,i+1, h}, x\sim q(\cdot|s)}\big[
f_h^{k, i}(x)\neq \alpha_h\circ f_h(x)
\big]
\leq\epsilon'/(2H)$,
\end{enumerate}
where $\alpha_h$ is some good permutations between $f^{k,i}_h$ and $f_h$. If case 1 happens, then there exists a state $s^*\in \cS_h$ such that
\begin{align}\label{eq:epsilon'}
\Pr_{x\sim q(\cdot|{s^*})}\big[f_h^{k,i}(x)\neq \alpha_h\circ f_h(x)\big]\cdot \mu_{k,i+1,h}(s^*)
>\frac{\epsilon'}{2H|\cS|}.
\end{align}
If $\cD_{h,\alpha_h(s^*)}\in \cZ_{k,i,h}$, where $\cZ_{k,i,h}$ is defined as in Corollary \ref{coro:1}, by Lemma~\ref{lem:label}, with probability at least $1-\cO(\delta_1)$,
\[
\Pr_{x\sim q(\cdot|s^*)}[
f_h^{k,i}(x) 
\neq \alpha_h^*\circ f_h(x)]
\le \epsilon
\]
and $\alpha_h^*(s^*) = \alpha_h(s^*)$.
Thus,
$
\mu_{k,i+1,h}(s^*)>\frac{\epsilon'}{2H|\cS|}/\epsilon >1
$,
a contradiction with  $\mu_{k,i+1,h}(s^*)\le 1$.
Therefore, there is no $\cD_{h,\alpha_h(s^*)}$ in $ \cZ_{k,i,h}$. 
Then, due to $\Pr_{x\sim q(\cdot|s^*)}\big[f_h^{k,i}(x)\neq \alpha_h\circ f_h(x)\big]\le 1$, by Equation \eqref{eq:epsilon'}, we have
\begin{align}\label{eq:mulower}
\mu_{k,i+1,h}(s^*)>\frac{\epsilon'}{2H|\cS|}.
\end{align}
Since $f^{k, i+1}_h$ is trained on $\unif(\{\mu_{k',i',h}\}_{(k',i')< (k,i+1)})$,
by Definition of $\ulo$,
with probability at least $1-\delta_1$,
\begin{align}\label{eq:fi+1}
\Pr_{s\sim \mu_{k,i+1,h},x\sim q(\cdot|s)}\big[
f^{k, i+1}_h(x)\neq \alpha_h'(s)
\big]
\le k\cdot J
\cdot g(\Theta(kJB), \delta_1)\le\epsilon^2,
\end{align}
with $B\geq B_0$ ($B_0$ is defined in Equation \eqref{eq:choiceB}) and $\alpha_h'$ is some good permutation between $f^{k,i+1}_h$ and $f_h$.
Thus, by Equation \eqref{eq:mulower} and the choice of $\epsilon$ and $\epsilon'$, we have 
\begin{align}
  \Pr_{x\sim q(\cdot|s^*)}\big[
f^{k, i+1}_h(x)\neq \alpha_h'(s^*)
\big]< \epsilon/25. 
\end{align}
Thus, 
\begin{align}
  \mu_{k,i+1,h}(s^*)\cdot\Pr_{x\sim q(\cdot|s^*)}\big[
f^{k, i+1}_h(x) = \alpha_h'(s^*)
\big]> \frac{\epsilon'}{2H|\cS|}\cdot (1-\epsilon/25)>24\epsilon\cdot\log(\delta_1^{-1}), \end{align}
where the last inequality is due to $\epsilon<\epsilon'<1$.
By Chernoff bound,
with probability at least 
$1-\cO(\delta_1)$,
\[
|\{x\in \cD''_{k, i+1,h}: f^{k, i+1}_h(x) = \alpha_h'(s^*)\}| \ge 3\epsilon \cdot B\log(\delta_1^{-1}).
\]
Therefore, if case 1 happens, one state $s$ will be confirmed in iteration $i+1$ and $\alpha_h^*(s^*) = \alpha_h'(s^*)$ is defined.

To analyze case 2, we first define sets $\{\cG_{k,i+1,h}\}_{h=1}^{H+1}$ with $\cG_{k,i+1,h}:=\{s\in\cS_h~|~\cD_{h,s}\in\cZ_{k,i+1,h}\}$, i.e., $\cG_{k,i+1,h}$ contains all confirmed states of level $h$ before iteration $i+1$ at episode $k$. If case 2 happens, we further divide the situation into two subcases:
\begin{enumerate}
    \item[a)] for all $h\in[H+1]$, for all $s\in \cG^c_{k,i+1,h}$, $\mu_{k,i+1,h}(s)\leq \epsilon'/(8H|\cS|)$;
    \item[b)] there exists an $h\in[H+1]$ and a state $s^*\in\cG^c_{k,i+1,h}$ such that $\mu_{k,i+1,h}(s^*)\geq \epsilon'/(8H|\cS|)$,
\end{enumerate}
First notice that for every $h\in[H+1]$ and $j>i$, since $f^{k,j}_h$ is trained on $\unif(\{\mu_{k',i',h}\}_{(k',i')\leq(k,j)})$, by Definition of $\ulo$ and our choice of $B$ in Equation \eqref{eq:choiceB}, with probability at least $1-\delta_1$, we have
\begin{align}\label{eq:correctj}
    &\Pr_{s\sim\mu_{k,i+1,h},x\sim q(\cdot|s)}[f^{k,j}_h\neq\alpha_h'(s)]\leq \epsilon^2,\\
    \Rightarrow &\sum_{s\in\cG_{k,i+1,h}} \mu_{k,i+1,h}(s)\Pr_{x\sim q(\cdot|s)}[f^{k,j}_h(x)\neq \alpha_h'(s)]+\sum_{s\notin\cG_{k,i+1,h}} \mu_{k,i+1,h}(s)\Pr_{x\sim q(\cdot|s)}[f^{k,j}_h(x)\neq \alpha_h'(s)]\leq\epsilon^2,
\end{align}
where $\alpha_h'$ is some good permutation between $f^{k,j}_h$ and $f_h$. 

If subcase a) happens, note that for $s\in\cG_{k,i+1,h}$, due to the FixLabel routine (Algorithm \ref{alg:label}), $\alpha_h'(s)=\alpha_h^*(s)$, for $f^{k,j}_h (j>i)$ we have
\begin{align}\label{eq:subcase1}
    &\sum_{s\in\cS_h}\mu_{k,i+1,h}(s)\Pr_{x\sim q(\cdot|s)}[f^{k,j}_h(x)\neq \alpha_h^*(s)]\\
    =&\sum_{s\in\cG_{k,i+1,h}} \mu_{k,i+1,h}(s)\Pr_{x\sim q(\cdot|s)}[f^{k,j}_h(x)\neq \alpha_h^*(s)]+\sum_{s\notin\cG_{k,i+1,h}} \mu_{k,i+1,h}(s)\Pr_{x\sim q(\cdot|s)}[f^{k,j}_h(x)\neq \alpha_h^*(s)]\\
    =&\sum_{s\in\cG_{k,i+1,h}} \mu_{k,i+1,h}(s)\Pr_{x\sim q(\cdot|s)}[f^{k,j}_h(x)\neq \alpha_h'(s)]+\sum_{s\notin\cG_{k,i+1,h}} \mu_{k,i+1,h}(s)\Pr_{x\sim q(\cdot|s)}[f^{k,j}_h(x)\neq \alpha_h^*(s)]\\
    \leq& ~\epsilon^2+\epsilon'/(8H)<\epsilon'/(4H).
\end{align}
Taking a union bound over all $f^{k,j}_{[H+1]}$, we have that for any $h\in[H+1]$, with probability at least $1-\cO(H\delta_1)$,
\begin{align}
    &\Pr_{s\sim\mu_{k,j+1,h},x\sim q(\cdot|s)}[f^{k,j}_h(x)=\alpha_h^*(s)]\geq \Pr_{s\sim\mu_{k,j+1,h},x\sim q(\cdot|s)}[f^{k,j}_h(x)=\alpha_h^*(s)=f^{k,i}_h(x)]\\
    &\geq \Pr_{\text{for all~} h'\in[h], s_{h'}\sim \mu_{k,j+1,h'},x_{h'}\sim q(\cdot|s_{h'})}[\text{ for all }h'\in[h], f^{k,j}_{h'}(x_{h'})=\alpha_{h'}^*(s)=f^{k,i}_{h'}(x_{h'})]\\
    &= \Pr_{\text{for all~} h'\in[h], s_{h'}\sim \mu_{k,i+1,h'},x_{h'}\sim q(\cdot|s_{h'})}[\text{ for all }h'\in[h], f^{k,j}_{h'}(x_{h'})=\alpha_{h'}^*(s)=f^{k,i}_{h'}(x_{h'})]\\
    &\geq 1-(\epsilon'/(2H)+\epsilon'/(4H))\cdot H \geq 1-\epsilon'.
\end{align}
Therefore, if case 2 and subcase a) happens, the desired result is obtained. 

If subcase b) happens, we consider the function $f^{k,i+1}_h$. By Equation \eqref{eq:correctj}, 
\begin{align}
    &\mu_{k,i+1,h}(s^*)\cdot\Pr_{x\sim q(\cdot|s^*)}[f^{k,i+1}_h(x)\neq \alpha_h'(s^*)]\leq\epsilon^2\\
    \Rightarrow &\Pr_{x\sim q(\cdot|s^*)}[f^{k,i+1}_h(x)\neq \alpha_h'(s^*)]\leq\epsilon^2/(\epsilon'/(8H|\cS|))\leq\epsilon,
\end{align}
where $\alpha_h'$ here is some good permutation between $f^{k,i+1}_h$ and $f_h$. Thus,
\begin{align}
  \mu_{k,i+1,h}(s^*)\cdot\Pr_{x\sim q(\cdot|s^*)}\big[
f^{k, i+1}_h(x) = \alpha_h'(s^*)
\big]> \frac{\epsilon'}{8H|\cS|}\cdot (1-\epsilon)>6\epsilon\cdot \log(\delta_1^{-1}).    
\end{align}
By Chernoff bound, with probability at least $1-\cO(\delta_1)$, $|\{x\in \cD''_{k, i+1,h}: f^{k, i+1}_h(x) = \alpha_h'(s^*)\}| \ge 3\epsilon \cdot B\log(\delta_1^{-1}).$
Therefore, the state $s^*$ will be confirmed in iteration $i+1$ and $\alpha_h^*(s^*)=\alpha_h'(s^*)$ is defined.

In conclusion, for each iteration, there are two scenarios, either the desired result in Lemma \ref{lem:tsr} holds already or a new state will be confirmed for the next iteration. Since there are in total $\sum_{h=1}^{H+1}|\cS_h|\leq (H+1)|\cS|$ states, after $J:=(H+1)|\cS|+1$ iterations, by Lemma \ref{lem:label}, with probability at least $1-\cO(H|\cS|\delta_1)$, for every $j\geq J$, for all $h\in[H+1]$ and all $s\in\cS_h$, we have $\Pr_{x\sim q(\cdot|s)}[f^{k,j}_h(x)\neq \alpha_h^*(s)]\leq \epsilon$. Therefore, it holds that for 
\begin{align}
    \Pr_{s\sim \mu_{k,j+1,h},x\sim q(\cdot|s)}(f^{k,j}_h(x)\neq \alpha_h^*(s))\leq \epsilon< \epsilon'.
\end{align}
\end{proof}

\begin{proposition}\label{prop:main}
	Suppose in Definition \ref{def:ulo}, $g^{-1}(\epsilon, \delta_1)=\poly(1/\epsilon, \log(\delta^{-1}_1))$ for any $\epsilon, \delta_1\in(0,1)$ and $\sA$ is $(\varepsilon, \delta_2)$-correct with sample complexity $\poly\left(|\cS|,|\cA|,H,1/\varepsilon,\log\left(\delta^{-1}_2\right)\right)$ for any $\varepsilon,\delta_2\in (0,1)$.
	Then for each iteration of the outer loop of Algorithm~\ref{alg:framework},
	the policy $\phi^n$ is an $\varepsilon/3$-optimal policy for the BMDP with probability at least $0.99$, using at most $\poly\left(|\cS|,|\cA|,H,1/\varepsilon\right)$ trajectories.
\end{proposition}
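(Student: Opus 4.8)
The plan is to show that, within a single pass of the outer loop, the interaction between the no-regret algorithm $\sA$ and the routine \tsr~ is statistically indistinguishable from running $\sA$ directly on the underlying MDP $\cM'$ up to a fixed relabeling of the latent states, and then to pay a small extra price for decoding error when $\pi^{K+1}$ is composed with $f^{K+1,J}_{[H+1]}$ to form $\phi^n$. First I would fix $n$, run $\sA$ with accuracy parameter $\varepsilon_{\sA}=\varepsilon/6$ and a constant confidence $\delta_2$, and let $K=\poly(|\cS|,|\cA|,H,1/\varepsilon)$ be its sample complexity; crucially $K$ is determined before $\epsilon,\epsilon',\delta_1,B_0$ are chosen, so $B_0$ in Equation~\eqref{eq:choiceB} introduces no circularity. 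Applying Lemma~\ref{lem:tsr} with $j=J$ at every episode $k\in[K+1]$, on an event $\cE_k$ of probability $\ge 1-\cO(H|\cS|\delta_1)$ the functions $f^{k,J}_{[H+1]}$ decode observations drawn under the visitation distribution of $\pi^k\circ f^{k,J}_{[H+1]}$ with per-level error at most $\epsilon'$, all under a single permutation $\alpha^*$ held fixed across episodes by the label-standard set $\cZ$ and \texttt{FixLabel} (exactly the consistency guaranteed by Lemma~\ref{lem:label} and Corollary~\ref{coro:1}).

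Next I would convert this marginal decoding bound into a trajectory-level coupling. Running $\pi^k\circ f^{k,J}_{[H+1]}$ on the BMDP produces a true latent path $s_1,\dots,s_{H+1}$ whose level-$h$ marginal is precisely $\mu_{k,J+1,h}$, so by a union bound over levels the probability that $f^{k,J}_h(x_h)\neq\alpha^*_h(s_h)$ for some $h$ is at most $(H+1)\epsilon'$; off this event the decoded path equals the relabeled true path and the executed actions coincide with those $\pi^k$ would take on the relabeled $\cM'$. Hence the law of $\tau^k$ lies within total variation $(H+1)\epsilon'$ of a genuine trajectory of $\pi^k$ on $\cM'$. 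Coupling the internal randomness of $\sA$ and applying a maximal coupling per episode, the entire real interaction agrees with the idealized run of $\sA$ on $\cM'$ with probability at least $1-(K+1)(H+1)\epsilon'-\cO((K+1)H|\cS|\delta_1)$; on this agreement event $\sA$ receives exactly the inputs it would on $\cM'$, so its $(\varepsilon_{\sA},\delta_2)$-correctness yields that $\pi^{K+1}$ is $\varepsilon_{\sA}$-optimal for $\cM'$, since relabeling is value-preserving.

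Finally, because $V_1^*$ for the BMDP equals $V_1^*$ for $\cM'$, it remains to control $V_1^{\pi^{K+1}}(\cM')-V_1^{\phi^n}(\text{BMDP})$. The same level-wise coupling applied to $\phi^n=\pi^{K+1}\circ f^{K+1,J}_{[H+1]}$, again invoking Lemma~\ref{lem:tsr} with $j=J$ for $\phi^n$'s own visitation distribution, shows the two trajectory laws differ in total variation by at most $(H+1)\epsilon'$, so with rewards in $[0,1]$ the value gap is at most $H(H+1)\epsilon'$. Collecting terms gives $V_1^*-V_1^{\phi^n}\le \varepsilon/6+H(H+1)\epsilon'$, and I would choose $\epsilon'$ small enough that $H(H+1)\epsilon'\le\varepsilon/6$ and that the cumulative failure and total-variation terms fall below $0.01$. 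Since the constraint $\epsilon'>50H\epsilon|\cS|\log(\delta_1^{-1})$ forces only a $1/\poly$ choice of $\epsilon$, the hypothesis $g^{-1}=\poly(1/\epsilon,\log(\delta_1^{-1}))$ keeps $B_0=\poly$, and the trajectory count $\cO(K^2J^2 B)$ incurred over the outer iteration stays $\poly(|\cS|,|\cA|,H,1/\varepsilon)$.

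The main obstacle is the second step: turning the marginal, per-level decoding guarantees of Lemma~\ref{lem:tsr} into a faithful coupling of the whole sequential $\sA$–\tsr~ interaction, ensuring that decoding errors neither compound across the $H$ levels of a single trajectory nor accumulate uncontrollably across the $K+1$ episodes, and that the latent relabeling $\alpha^*$ stays globally consistent — so that the correctness of $\sA$, proved only for an exact tabular MDP, can be transported to the approximate simulator that \tsr~ provides.
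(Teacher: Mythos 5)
Your proposal is correct and follows essentially the same route as the paper's proof: invoke Lemma~\ref{lem:tsr} at each of the $K+1$ episodes, union bound over the $H+1$ levels and the episodes so that the trajectories fed to $\sA$ coincide with genuine trajectories of the relabeled underlying MDP with probability $1-\cO(K H\epsilon')$ minus the $\ulo$ failure probability, transfer $\sA$'s correctness guarantee, then pay an additional $\cO(H^2\epsilon')$ value loss when composing $\pi^{K+1}$ with the learned (rather than true, relabeled) decoding functions, and finally tune $\epsilon,\epsilon',\delta_1,\delta_2$ to reach $\varepsilon/3$-optimality with probability $0.99$ while keeping $B_0$ polynomial. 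Your coupling/total-variation phrasing and your explicit remarks on the non-circularity of $K$ versus $B_0$ and on the fixed permutation $\alpha^*$ are just more formal renderings of the event-conditioning argument the paper uses, not a different proof.
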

\begin{proof}[Proof of Proposition~\ref{prop:main}]
We first show that the trajectory obtained by running $\pi^k$ with  the learned decoding functions $f^{k,J}_{[H+1]}$ matches, with high probability, that from running $\pi^k$ with $\alpha_{[H+1]}^*\circ f_{[H+1]}$. 
Let $K=C(\varepsilon/4, \delta_2)$ be the total number of episodes played by $\sA$ to learn an $\varepsilon/4$-optimal policy with probability at least $1-\delta_2$. For each episode $k\in[K]$,  
let the trajectory of observations be $\{x^k_h\}_{h=1}^{H+1}$.
We define event $$\cE_{k}:=\{\forall h\in [H+1],
f^{k, J}_h(x^k_h) = \alpha_h^*(f_{h}(x^k_h))
\},$$
where $J=(H+1)|\cS|+1$.
Note that on $\cE_k$, the trajectory of running $\pi^{k}\circ \alpha^*_{[H+1]}\circ f_{[H+1]}$ equals running $\pi^{k}\circ f^{k, J}_{[H+1]}$.
We also let the event $\cF$ be that $\ulo$ succeeds on every iteration (satisfies Lemma~\ref{lem:tsr}).
Thus,
$$
\Pr[\cF] \ge 1- K\cdot J\cdot\delta_1=1-\poly(|\cS|, |\cA|, H, 1/\varepsilon, \log(\delta_1^{-1}))\cdot\delta_1.
$$
Furthermore, each $x_{k, h}$ is obtained by the distribution $\sum_{s}\mu_{k, J+1, h}(s)q(\cdot|s)$. On $\cF$, by Lemma \ref{lem:tsr}, we have 
\[
\Pr[f^{k, J}_h(x^k_h) = \alpha_h^*(f_{h}(x^k_h))] \le 
\epsilon'
\]
by the choice of $B$.
Therefore, $$\Pr[\cE_k|\cF]
\ge 1-(H+1)\epsilon'.$$
Overall, we have $$\Pr\Big[\cE_k,~ \forall k\in [K]\Big|\cF\Big]\ge 1-K(H+1)\epsilon'.$$
Thus, with probability at least $1-\delta_2-\poly(|\cS|, |\cA|, H, 1/\varepsilon, \log(\delta_1^{-1}))\cdot(\epsilon'+\delta_1)$, $\cA$ outputs a policy $\pi$, that is $\varepsilon/4$-optimal for the underlying MDP with state sets $\{\cS_h\}_{h=1}^{H+1}$ permutated by $\alpha_{[H+1]}^*$, which we denote as event $\cE'$. 
Conditioning on $\cE'$, since on a high probability event $\cE''$ with $\Pr[\cE'']\ge 1-(H+1)\epsilon'$, $\pi\circ f_{[H+1]}^{K, J}$ and $\pi\circ \alpha^*_{[H+1]} \circ f_{[H+1]}$ have the same trajectory, the value achieved by $\pi\circ f_{[H+1]}^{K, J}$ and $\pi\circ \alpha^*_{[H+1]}\circ f_{[H+1]}$
differ by at most $(H+1)^2\epsilon'$.
Thus, with probability at least $1-\delta_2-\poly(|\cS|, |\cA|, H, 1/\varepsilon, \log(\delta_1^{-1}))\cdot(\epsilon'+\delta_1)$, the output policy $\pi\circ f_{[H+1]}^{K, J}$
 is at least  $\varepsilon/4+\cO(H^2\epsilon')$ accurate,
 i.e.,
\begin{align}
 V_1^{*} - V_1^{\pi\circ f_{[H+1]}^{K, J}}
 \le& ~V_1^* - V_1^{\pi\circ \alpha^*_{[H+1]}\circ f_{[H+1]}}
 + ~\cO(H^2\epsilon') 
 \le~ \varepsilon/4~+~\cO(H^2\epsilon').
\end{align}
Setting $\epsilon'$, $\delta_1$, and $\delta_2$ properly, $V_1^{*} - V_1^{\pi\circ f_{[H+1]}^{K, J}}\leq\varepsilon/3$ with probability at least $0.99$. Since $1/\delta_1 = \poly(|\cS|, |\cA|, H, 1/\varepsilon)$ and $1/\epsilon=\poly(|\cS|, |\cA|, H, 1/\varepsilon,\log(\delta_1^{-1}))$, $B_0$ in Lemma \ref{lem:label} and Lemma \ref{lem:tsr} is $\poly(|\cS|, |\cA|, H, 1/\varepsilon)$. The desired result is obtained.
\end{proof}

Finally, based on Proposition \ref{prop:main}, we establish Theorem \ref{thm:main}.

\begin{proof}[Proof of Theorem \ref{thm:main}]
By Proposition \ref{prop:main} and taking $N=\lceil\log(2/\delta)/2\rceil$, with probability at least $1-\delta/2$, there exists a policy in $\{\phi^n\}_{n=1}^N$ that is $\varepsilon/3$-optimal for the BMDP. For each policy $\phi^n$, we take $L:=\lceil9H^2/(2\varepsilon^2)\log(2N/\delta)\rceil$ episodes to evaluate its value. Then by Hoeffding's inequality, with probability at least $1-\delta/(2N)$, $|\bar{V}_1^{\phi^n}-V_1^{\phi^n}|\leq\varepsilon/3.$
By taking the union bound and selecting the policy $\phi\in\argmax_{\phi\in\phi^{[N]}} \bar{V}_1^{\phi}$, with probability at least $1-\delta$, it is $\varepsilon$-optimal for the BMDP. In total, the number of needed trajectories is $N\cdot \sum_{k=1}^K\sum_{i=1}^J \big((k-1)J+i+1\big)B + N\cdot L = \cO(N\cdot K^2\cdot J^2\cdot B+N\cdot L)=\poly(|\cS|, |\cA|, H, 1/\varepsilon,\log(\delta^{-1})).$ We complete the proof.
\end{proof}

\section{Examples of Unsupervised Learning Oracle}\label{app:example}

\section{More about Experiments}\label{app:experiment}
\paragraph{Parameter Tuning} 
In the experiments, for OracleQ, we tune the learning rate and a confidence parameter; for QLearning, we tune the learning rate and the exploration parameter $\epsilon$; for PCID, we follow the code provided in \cite{du2019provably}, tune the number of clusters for $k$-means and the number of trajectories $n$ to collect in each outer iteration, and finally select the better result between linear function and neural network implementation.

\paragraph{Unsupervised Learning Algorithms}
In our method, we use OracleQ as the tabular RL algorithm to operate on the decoded state space and try three unsupervised learning approaches: 1. first conduct principle component analysis (PCA) on the observations and then use $k$-means (KMeans) to cluster; 2. first apply PCA, then use Density-Based Spatial Clustering of Applications with Noise (DBSCAN) for clustering, and finally use support vector machine to fit a classifier; 3. employ Gaussian Mixture Model (GMM) to fit the observation data then generate a label predictor. We call the python library \texttt{sklearn} for all these methods. During unsupervised learning, we do not separate observations by levels but add level information in decoded states. Besides the hyperparameters for OracleQ and the unsupervised learning oracle, we also tune the batch size $B$ adaptively in Algorithm \ref{alg:tsr}. In our tests, instead of resampling over all previous policies as Line \ref{line:tsr_generate1} Algorithm \ref{alg:tsr}, we use previous data. Specifically, we maintain a training dataset $\cD$ in memory and for iteration $i$, generate $B$ training trajectories following $\pi\circ f^{i-1}_{[H+1]}$ and merge them into $\cD$ to train $\ulo$. Also, we stop training decoding functions once they become stable, which takes $100$ training trajectories when $H=5$, $500\sim1000$ trajectories when $H=10$, and $1000\sim2500$ trajectories when $H=20$. Since this process stops very quickly, we also skip the label matching steps (Line \ref{line:tsr_label} to Line \ref{line:tsr_label_end} Algorithm \ref{alg:tsr}) and the final decoding function leads to a near-optimal performance as shown in the results.

\cut{\section{Experiment Details\label{app:experiment}}

}

\end{document}